\documentclass[11pt]{article}
\usepackage[T1]{fontenc}
\usepackage{lmodern}
\usepackage[letterpaper,margin=1in]{geometry}
\usepackage{amsmath,amssymb,amsthm,mathtools}
\usepackage{booktabs,tabularx,array}
\usepackage{graphicx}
\usepackage{algorithm}
\usepackage{algpseudocode}
\usepackage{xcolor}
\usepackage{microtype}
\usepackage{enumitem}
\usepackage[round,authoryear]{natbib}
\usepackage{hyperref}
\usepackage[nameinlink,capitalise,noabbrev]{cleveref}
\hypersetup{
  colorlinks=true,
  citecolor=blue!55!black,
  linkcolor=blue!55!black,
  urlcolor=blue!55!black,
  pdftitle={High-Dimensional Nonparametric Change-Point Detection via Low-Rank Degree-Three Density Projection},
  pdfauthor={Guoqing Zhang and Zhaixin Chen}
}
\allowdisplaybreaks
\newtheorem{theorem}{Theorem}[section]
\newtheorem{proposition}[theorem]{Proposition}
\newtheorem{lemma}[theorem]{Lemma}
\newtheorem{corollary}[theorem]{Corollary}
\newtheorem{assumption}[theorem]{Assumption}
\theoremstyle{definition}
\newtheorem{definition}[theorem]{Definition}
\newtheorem{remark}[theorem]{Remark}

\AddToHook{env/theorem/begin}{\crefalias{section}{theorem}}
\AddToHook{env/proposition/begin}{\crefalias{theorem}{proposition}}
\AddToHook{env/lemma/begin}{\crefalias{theorem}{lemma}}
\AddToHook{env/corollary/begin}{\crefalias{theorem}{corollary}}
\AddToHook{env/assumption/begin}{\crefalias{theorem}{assumption}}
\AddToHook{env/definition/begin}{\crefalias{theorem}{definition}}
\AddToHook{env/remark/begin}{\crefalias{theorem}{remark}}
\crefname{assumption}{Assumption}{Assumptions}
\Crefname{assumption}{Assumption}{Assumptions}

\newcommand{\E}{\mathbb{E}}
\newcommand{\Pp}{\mathbb{P}}
\newcommand{\R}{\mathbb{R}}
\newcommand{\N}{\mathbb{N}}
\newcommand{\F}{\mathrm{F}}
\newcommand{\inj}{\mathrm{inj}}

\newcommand{\Sym}{\mathrm{Sym}}

\newcommand{\argminop}{\operatorname*{arg\,min}}
\newcommand{\argmaxop}{\operatorname*{arg\,max}}

\newcommand{\cB}{\mathcal{B}}

\newcommand{\cG}{\mathcal{G}}

\newcommand{\cJ}{\mathcal{J}}
\newcommand{\cM}{\mathcal{M}}
\newcommand{\cP}{\mathcal{P}}
\newcommand{\cN}{\mathcal{N}}

\newcommand{\inner}[2]{\left\langle #1,#2\right\rangle}
\newcommand{\norm}[2][]{\left\lVert #2\right\rVert_{#1}}

\newcommand{\LRD}{\textnormal{\textsc{LR-D3}}}
\newcommand{\Seeded}{\textnormal{\textsc{Seeded-LR-D3}}}

\newcommand{\eps}{\varepsilon}
\newcommand{\rhoScan}{\rho_{\mathrm{scan}}}

\title{High-Dimensional Nonparametric Change-Point Detection\\via Low-Rank Degree-Three Density Projection}
\author{
Guoqing Zhang\\
Operations Research Program\\
North Carolina State University\\
Raleigh, NC, USA\\
\texttt{Gzhang25@ncsu.edu}
\and
Zhaixin Chen\\
H. Milton Stewart School of Industrial and Systems Engineering\\
Georgia Institute of Technology\\
Atlanta, GA, USA\\
\texttt{zxchen08@gatech.edu}
}
\date{}

\begin{document}
\maketitle

\begin{abstract}
Distributional changes can be invisible to means and covariances yet appear in skewness, asymmetric interactions, or other third-order structure.  We develop a nonparametric change-point method that retains every degree-at-most-three coefficient of a density while avoiding direct density estimation.  For observations in $[-1,1]^d$, we construct a symmetric order-three Legendre feature tensor $H_3(X)\in\Sym^3(\R^{d+1})$ such that $A(f)=\E_fH_3(X)$ is an exact isometric encoding of the degree-three density projection: $\|A(f)-A(g)\|_{\F}=\|P_3(f-g)\|_{L^2}$.  A low-rank orthogonally decomposable jump is scanned through a frame score that equals its Frobenius norm and is Lipschitz in tensor injective norm.

The stochastic analysis is genuinely tensorial.  Matrix Bernstein cannot be applied to $H_3$.  Instead, fixed tensor contractions are degree-three polynomial chaoses with $\psi_{2/3}$ tails; a weighted sub-Weibull Bernstein inequality and a three-sphere net yield
\[
 \left\|\frac1N\sum_{i=1}^N\{H_3(X_i)-\E H_3(X_i)\}\right\|_{\inj}
 \lesssim_L \sqrt{\frac{d+x}{N}}+\frac{(d+x)^{3/2}}{N}.
\]
The two terms have the characteristic order-three tensor scaling and match the powers in sharp concentration results for simple random tensors.  For a coordinate-orthogonal specialization, the bound improves to $\sqrt{\log d}$ and enables a prefix-sum implementation in hundreds of dimensions.  We derive the exact population tent shape and localization margin, introduce a seeded shortest-interval algorithm with a padded local recentering step, and prove exact recovery by induction: null recursive segments remain inactive, every undetected change retains a balanced isolating interval, and the shortest active seed contains exactly one change before recentering.  A two-way cross-fitted scalar refinement attains $O_{\Pp}(\kappa^{-2})$ localization in the small-jump regime, matching a Le Cam lower bound on a pure cubic family whose degree-two projection jump is exactly zero.  Reproducible experiments at $d\in\{20,50,100,200\}$ and a three-change $d=100$ sequence demonstrate the intended high-dimensional regime without materializing a $(d+1)^3$ tensor.
\end{abstract}

\section{Introduction}
\label{sec:intro}

A high-dimensional distribution can change while its mean and covariance remain fixed.  Examples include a change in marginal skewness, a sign-asymmetric interaction among three coordinates, or a third-order latent factor.  Mean and covariance CUSUMs have zero population signal in such settings.  Fully nonparametric two-sample statistics can detect more general alternatives, but they often obscure which low-dimensional structure changed and may be expensive inside a multiscale change-point search.

We study a middle ground between parametric moment models and unrestricted density estimation.  Relative to the product-uniform measure on $[-1,1]^d$, retain the orthogonal projection of the density onto all tensor-product Legendre polynomials of total degree at most three.  There are $\binom{d+3}{3}$ coefficients, but they possess a canonical representation as a symmetric order-three tensor in $\R^{d+1}$.  This representation is not cosmetic: its Frobenius geometry exactly equals the $L^2$ geometry of the projected density, and each segment tensor is an expectation of a single-observation feature map.  Distributional change detection therefore becomes structured tensor-mean change detection.

The order-three setting is qualitatively different from the degree-two matrix setting.  A direct appeal to matrix Bernstein is invalid, tensor spectral norms are harder to optimize, and one sample can have injective norm of order $d^{3/2}$.  We address these issues in three steps.  First, we define a low-rank orthogonally decomposable (odeco) frame score that recovers the Frobenius magnitude of an odeco jump and whose perturbation is controlled by the tensor injective norm.  Second, we prove a tensor deviation bound from scalar polynomial-chaos concentration rather than flattening.  The resulting sample-mean rate
\[
  \sqrt{d/N}+d^{3/2}/N
\]
has the same two-regime scaling as sharp concentration for simple order-three random tensors \citep{alghattas2025sharp}.  Third, for the coordinate-odeco subclass we use only the $d$ cubic diagonal coefficients.  This reduces the stochastic complexity from $d$ to $\log d$, gives an $O(nd)$ prefix-sum implementation, and is the regime used in our $d=100$ and $d=200$ experiments.

For multiple changes, we combine seeded intervals with the shortest active interval principle \citep{baranowski2019narrowest,kovacs2023seeded}.  The proof is organized as an induction in the style of modern two-stage change-point analyses \citep{kumar2024functional}: after any number of correct detections, recursive boundaries may be slightly displaced from true changes, but this creates only short edge contaminations.  We prove that these contaminations remain below threshold, every undetected change still has a balanced seed interval, and the selected shortest seed cannot contain two true changes.  Because that seed need not place its change centrally, the algorithm pads it by $h/8$ on each side and re-maximizes on the padded interval; this deterministic recentering supplies the balance needed for the localization theorem.  A subsequent cross-fitted refinement learns a changing tensor direction on one parity and localizes a scalar change on the other.

\paragraph{Contributions.}
The main contributions are as follows.
\begin{enumerate}[leftmargin=1.4em,itemsep=2pt,topsep=2pt]
\item \textbf{Exact degree-three tensorization.}  We give a multi-index construction of $H_3$ that simultaneously resolves all normalization and permutation factors.  It yields an orthonormal basis correspondence between total-degree-three Legendre polynomials and $\Sym^3(\R^{d+1})$, including constant, linear, quadratic, pairwise, cubic, squared-linear, and triple-interaction terms.
\item \textbf{A tensor-specific sharp-order deviation bound.}  For independent observations with uniformly bounded densities, weighted sums of centered $H_3$ tensors obey an injective-norm inequality with a Gaussian term $\sqrt{d}$ and a third-order term $d^{3/2}\|w\|_\infty$.  The proof uses product-ensemble hypercontractivity for degree-three polynomial chaos, sub-Weibull Bernstein--Orlicz summation, and a net over three unit spheres; no matrix flattening or matrix Bernstein step appears.
\item \textbf{Single- and multiple-change localization.}  We derive the exact population tensor CUSUM and its linear squared-score margin.  A seeded shortest-interval estimator, followed by a padded local recentering scan, recovers the number and locations of multiple changes under an explicit signal-to-noise condition.  The full proof is by induction and includes both the boundary-contamination and balance-restoration steps that are often omitted in informal binary-segmentation arguments.
\item \textbf{Near-minimax refinement and a degree-separation lower bound.}  A two-way cross-fitted direction estimate reduces local tensor change detection to a scalar problem and reaches the $\kappa^{-2}$ rate for small jumps, up to logarithmic confidence factors.  A rank-one pure cubic construction gives a matching lower bound and has exactly zero jump in every degree-at-most-two projection coefficient; accordingly, degree-two projection-mean CUSUMs have zero population signal on this family.
\item \textbf{Scalable implementation and reproducible evidence.}  The full tensor is never formed.  General contractions are evaluated in $O(d)$ using power sums, while the diagonal-odeco specialization uses $d$ cubic prefix sums.  In $30$-replicate experiments, the method remains stable through $d=200$ and detects all three changes in every $d=100$ multiple-change replicate.
\end{enumerate}

\paragraph{Related work.}
High-dimensional change-point methods often exploit sparse mean projections \citep{wang2018sparse}, penalized segmentation \citep{wang2020univariate}, or structured regression changes \citep{rinaldo2021localizing,xu2022regression}.  Nonparametric multivariate and functional procedures include kernel tests \citep{harchaoui2007retrospective,gretton2012kernel}, graph and nearest-neighbor methods, and local density constructions \citep{madrid2021optimal,madrid2022functional}.  Our method is nonparametric at the density level but intentionally projection-limited: it detects exactly the component visible through total degree three.  This restriction gives an interpretable tensor object and permits low-rank regularization.  Tensor decompositions and odeco models are standard tools for latent-variable learning \citep{kolda2009tensor,anandkumar2014tensor}; our use is different because the tensor is a density-projection jump inside a CUSUM process.  The tensor concentration argument is related to product-space hypercontractivity \citep{mossel2010noise}, higher-order polynomial concentration \citep{adamczak2015nonlipschitz}, sub-Weibull Bernstein--Orlicz bounds \citep{kuchibhotla2022beyond,bong2023tight}, and recent sharp simple-tensor concentration \citep{alghattas2025sharp,abdalla2025dimensionfree}.

\section{Degree-three density projection as a symmetric tensor}
\label{sec:representation}

\subsection{Normalized Legendre basis}
Let $\mu$ be the product-uniform probability measure on $[-1,1]^d$.  Define the normalized univariate Legendre polynomials
\begin{equation}
 \phi_0(x)=1,\qquad
 \phi_1(x)=\sqrt3\,x,\qquad
 \phi_2(x)=\frac{\sqrt5}{2}(3x^2-1),\qquad
 \phi_3(x)=\frac{\sqrt7}{2}(5x^3-3x).
 \label{eq:legendre}
\end{equation}
They are orthonormal in $L^2(\operatorname{Unif}[-1,1])$.  For a multi-index $\alpha=(\alpha_1,\ldots,\alpha_d)\in\N_0^d$ with $|\alpha|=\sum_j\alpha_j\le3$, set
\[
 \psi_\alpha(x)=\prod_{j=1}^d\phi_{\alpha_j}(x_j).
\]
The functions $\{\psi_\alpha:|\alpha|\le3\}$ are an orthonormal basis for the total-degree-at-most-three polynomial subspace.  If $f=dP/d\mu\in L^2(\mu)$, its projection is
\begin{equation}
 P_3f(x)=\sum_{|\alpha|\le3}\theta_\alpha(f)\psi_\alpha(x),
 \qquad \theta_\alpha(f)=\E_f\psi_\alpha(X).
 \label{eq:p3projection}
\end{equation}
Because $f$ is a density, $\theta_0(f)=1$.

\subsection{The symmetric tensor basis}
Put $p=d+1$ and index tensor coordinates by $\{0,1,\ldots,d\}$.  For $|\alpha|\le3$, form the multiset
\[
 I(\alpha)=\{\underbrace{0,\ldots,0}_{3-|\alpha|},
 \underbrace{1,\ldots,1}_{\alpha_1},\ldots,
 \underbrace{d,\ldots,d}_{\alpha_d}\}.
\]
The number of distinct permutations is
\begin{equation}
 q_\alpha=\frac{3!}{(3-|\alpha|)!\prod_{j=1}^d\alpha_j!}.
 \label{eq:qalpha}
\end{equation}
Define $E_\alpha\in\Sym^3(\R^p)$ by assigning $q_\alpha^{-1/2}$ to every entry whose ordered indices are a permutation of $I(\alpha)$ and zero elsewhere.  The family $\{E_\alpha:|\alpha|\le3\}$ is an orthonormal basis of $\Sym^3(\R^p)$ because the supports are disjoint, every tensor has unit Frobenius norm, and
$\binom{d+3}{3}=\dim\Sym^3(\R^{d+1})$.

\begin{definition}[Degree-three Legendre feature tensor]
\label{def:H3}
For $x\in[-1,1]^d$, define
\begin{equation}
 H_3(x)=\sum_{|\alpha|\le3}\psi_\alpha(x)E_\alpha\in\Sym^3(\R^{d+1}),
 \qquad A(f)=\E_fH_3(X).
 \label{eq:H3}
\end{equation}
\end{definition}

The tensor entry rule is simply ``orthonormal feature divided by the square root of the number of distinct permutations.''  For comparison with unnormalized polynomial coefficients, write
\begin{align}
P_3f(x)=P_2f(x)&+\sum_j c_jp_3(x_j)
+\sum_{i\ne j}\eta_{ij}p_2(x_i)x_j
+\sum_{i<j<k}\rho_{ijk}x_ix_jx_k,
\label{eq:unnormalized-p3}
\end{align}
where $p_2=(3x^2-1)/2$, $p_3=(5x^3-3x)/2$, and the lower-order coefficients use the analogous unnormalized basis.  Then $c_j=7\E_fp_3(X_j)$, $\eta_{ij}=15\E_fp_2(X_i)X_j$, and $\rho_{ijk}=27\E_fX_iX_jX_k$.  The complete entry normalization is shown in \cref{tab:entries}.

\begin{table}[t]
\centering
\small
\caption{Entries of $A(f)$ in terms of the conventional unnormalized polynomial coefficients.  Every listed value is assigned to all distinct permutations of the indicated multiset.}
\label{tab:entries}
\begin{tabular}{@{}llll@{}}
\toprule
Polynomial term & Multiset & Permutations & Tensor entry \\
\midrule
$1$ & $\{0,0,0\}$ & $1$ & $1$ \\
$a_jx_j$ & $\{0,0,j\}$ & $3$ & $a_j/3$ \\
$b_jp_2(x_j)$ & $\{0,j,j\}$ & $3$ & $b_j/\sqrt{15}$ \\
$\gamma_{ij}x_ix_j$ & $\{0,i,j\}$ & $6$ & $\gamma_{ij}/(3\sqrt6)$ \\
$c_jp_3(x_j)$ & $\{j,j,j\}$ & $1$ & $c_j/\sqrt7$ \\
$\eta_{ij}p_2(x_i)x_j$ & $\{i,i,j\}$ & $3$ & $\eta_{ij}/(3\sqrt5)$ \\
$\rho_{ijk}x_ix_jx_k$ & $\{i,j,k\}$ & $6$ & $\rho_{ijk}/(9\sqrt2)$ \\
\bottomrule
\end{tabular}
\end{table}

\begin{proposition}[Isometric tensorization]
\label{prop:isometry}
For any $f,g\in L^2(\mu)$,
\begin{equation}
 \norm[\F]{A(f)-A(g)}=\norm[L^2(\mu)]{P_3(f-g)}.
 \label{eq:isometry}
\end{equation}
More generally, for every $B\in\Sym^3(\R^p)$,
\begin{equation}
 \E_\mu\inner{H_3(X)}{B}_{\!\F}^2=\norm[\F]{B}^2.
 \label{eq:direction-isometry}
\end{equation}
\end{proposition}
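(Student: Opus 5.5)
The plan is to reduce both identities to Parseval's identity, once in the orthonormal basis $\{E_\alpha\}$ of $\Sym^3(\R^p)$ and once in the orthonormal basis $\{\psi_\alpha\}$ of the degree-three polynomial subspace of $L^2(\mu)$. We extend $A$ linearly to all $f\in L^2(\mu)$ by setting $A(f)=\int H_3 f\,d\mu$; for a density this is $\E_fH_3(X)$. By \eqref{eq:H3} and linearity of the integral,
\[
 A(f)=\sum_{|\alpha|\le3}\Big(\int\psi_\alpha f\,d\mu\Big)E_\alpha=\sum_{|\alpha|\le3}\theta_\alpha(f)E_\alpha ,
\]
where $\theta_\alpha(f)=\inner{f}{\psi_\alpha}_{L^2(\mu)}$ agrees with \eqref{eq:p3projection} for densities. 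Each integral is finite because $\psi_\alpha$ is a bounded polynomial on $[-1,1]^d$ and $f\in L^2(\mu)\subset L^1(\mu)$.

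\textbf{Step 1 (the isometry \eqref{eq:isometry}).} By linearity, $A(f)-A(g)=\sum_\alpha\theta_\alpha(f-g)E_\alpha$. The $E_\alpha$ are orthonormal in the Frobenius inner product: their supports are disjoint, and each has $q_\alpha$ nonzero entries equal to $q_\alpha^{-1/2}$, so $\norm[\F]{E_\alpha}^2=q_\alpha\cdot q_\alpha^{-1}=1$. Hence
\[
 \norm[\F]{A(f)-A(g)}^2=\sum_\alpha\theta_\alpha(f-g)^2 .
\]
Since $\{\psi_\alpha:|\alpha|\le3\}$ is an orthonormal basis of the projection range, $P_3(f-g)=\sum_\alpha\theta_\alpha(f-g)\psi_\alpha$, and Parseval gives $\norm[L^2(\mu)]{P_3(f-g)}^2=\sum_\alpha\theta_\alpha(f-g)^2$. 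The two squared norms coincide; taking square roots gives \eqref{eq:isometry}.

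\textbf{Step 2 (the identity \eqref{eq:direction-isometry}).} Because $\{E_\alpha\}$ is an orthonormal basis of $\Sym^3(\R^p)$ (the dimension count $\binom{d+3}{3}$ was recorded above), any $B\in\Sym^3(\R^p)$ expands as $B=\sum_\alpha b_\alpha E_\alpha$ with $b_\alpha=\inner{B}{E_\alpha}_{\F}$, and $\norm[\F]{B}^2=\sum_\alpha b_\alpha^2$. Then
\[
 \inner{H_3(x)}{B}_{\F}=\sum_\alpha b_\alpha\psi_\alpha(x).
\]
Squaring and integrating against $\mu$, orthonormality of the $\psi_\alpha$ in $L^2(\mu)$ gives $\E_\mu\inner{H_3(X)}{B}_{\F}^2=\sum_\alpha b_\alpha^2=\norm[\F]{B}^2$.

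\textbf{Main obstacle.} No step is deep; the only point needing care is the bookkeeping behind orthonormality of $\{E_\alpha\}$. Two facts must be checked. First, the map $\alpha\mapsto I(\alpha)$ is a bijection from $\{|\alpha|\le3\}$ onto size-three multisets of $\{0,\dots,d\}$: the multiplicity of each $j\ge1$ in $I(\alpha)$ recovers $\alpha_j$, and the number of zeros is $3-|\alpha|$. This bijection is what makes the supports disjoint and exhaustive. Second, $q_\alpha$ in \eqref{eq:qalpha} is exactly the number of distinct orderings of that multiset, by the standard multinomial count. These are the same facts asserted in the paragraph preceding \cref{def:H3}, so we would simply invoke them. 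One should also note that $H_3(x)$ is genuinely symmetric, since each $E_\alpha$ is constant on permutation orbits.
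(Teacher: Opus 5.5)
Your proof is correct and matches the paper's argument: Parseval in the orthonormal basis $\{E_\alpha\}$ of $\Sym^3(\R^p)$ combined with Parseval in $\{\psi_\alpha\}$, and the same expansion $B=\sum_\alpha b_\alpha E_\alpha$ for the directional identity. Two of your additions are useful clarifications that the paper leaves implicit: extending $A$ linearly to all $f\in L^2(\mu)$ via $A(f)=\int H_3f\,d\mu$ (needed because the statement is posed for general $f,g\in L^2(\mu)$), and explicitly checking the $\alpha\leftrightarrow I(\alpha)$ multiset bijection behind orthonormality.
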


Thus the projected density jump $P_3(f_1-f_0)$ and tensor jump $A(f_1)-A(f_0)$ have exactly the same signal size.  The construction also clarifies identifiability: two densities with equal degree-three projections are indistinguishable to every method based only on $H_3$.

\subsection{Implicit contractions}
Materializing $H_3(x)$ costs $\Theta(d^3)$ memory, which is unnecessary.  Every algorithm below accesses the tensor through contractions.  For $u=(a,b_1,\ldots,b_d)\in\R^{d+1}$, let $y_j=b_j\phi_1(x_j)$ and define $s_k=\sum_jy_j^k$.  A direct expansion gives
\begin{align}
H_3(x)[u,u,u]
={}&a^3+\sqrt3a^2s_1+\sqrt3a\sum_jb_j^2\phi_2(x_j)
+\sqrt6a\frac{s_1^2-s_2}{2}\nonumber\\
&+\sum_jb_j^3\phi_3(x_j)
+\sqrt3\sum_i b_i^2\phi_2(x_i)(s_1-y_i)
+\sqrt6\frac{s_1^3-3s_1s_2+2s_3}{6}.
\label{eq:implicit-contraction}
\end{align}
This costs $O(d)$ operations.  Polarization or automatic differentiation evaluates mixed contractions and gradients with the same order.  Consequently, a rank-$r$ frame objective over $n$ samples costs $O(ndr)$ per iteration rather than $O(nd^3)$ storage.

\section{Tensor CUSUMs and low-rank scores}
\label{sec:method}

\subsection{Piecewise distribution model}
Let $X_1,\ldots,X_n\in[-1,1]^d$ be independent.  There are change points
\[
 0=\eta_0<\eta_1<\cdots<\eta_K<\eta_{K+1}=n
\]
and densities $f_0,\ldots,f_K$ such that $X_i\sim f_k$ for $\eta_k<i\le\eta_{k+1}$.  Write
\begin{equation}
 A_k=A(f_k),\qquad D_k=A_k-A_{k-1},\qquad
 \kappa_k=\norm[\F]{D_k},
 \label{eq:jumps}
\end{equation}
and let $\kappa=\min_k\kappa_k$ and $\Delta=\min_k(\eta_k-\eta_{k-1})$.

\begin{assumption}[Bounded density and odeco projected jumps]
\label{ass:main}
There is $L\ge1$ such that $0\le f_k\le L$ $\mu$-almost everywhere for every segment.  Each nonzero jump admits an orthogonal decomposition
\begin{equation}
 D_k=\sum_{j=1}^{r_k}\lambda_{kj}v_{kj}^{\otimes3},
 \qquad v_{k1},\ldots,v_{kr_k}\text{ orthonormal},\qquad r_k\le r.
 \label{eq:odeco}
\end{equation}
\end{assumption}
The density bound is used only to transfer degree-three polynomial moments under $\mu$ to moments under the segment laws.  The odeco condition is a tensor analogue of a symmetric low-rank eigendecomposition.  Approximate odeco jumps are covered by adding the residual Frobenius norm to the deterministic error; see \cref{rem:approx}.

\subsection{CUSUM tensor}
For an interval $(s,e]$ and split $s<t<e$, define
\begin{equation}
 \widehat C_{s,e}^{t}=
 \sqrt{\frac{(t-s)(e-t)}{e-s}}
 \left\{\frac1{e-t}\sum_{i=t+1}^eH_3(X_i)
 -\frac1{t-s}\sum_{i=s+1}^tH_3(X_i)\right\}.
 \label{eq:tensor-cusum}
\end{equation}
Let $C_{s,e}^{t}=\E\widehat C_{s,e}^{t}$ and $E_{s,e}^{t}=\widehat C_{s,e}^{t}-C_{s,e}^{t}$.  The associated weights satisfy
\begin{equation}
 \sum_{i=s+1}^e w_i^2=1,
 \qquad \sum_{i=s+1}^e w_i=0,
 \qquad \max_i|w_i|=
 \sqrt{\frac{\max\{t-s,e-t\}}{(e-s)\min\{t-s,e-t\}}}.
 \label{eq:cusum-weights}
\end{equation}

\subsection{Injective norm and the SOD frame score}
For $T\in\Sym^3(\R^p)$, define the injective norm
\begin{equation}
 \norm[\inj]{T}=\sup_{u,v,w\in\mathbb S^{p-1}}|T[u,v,w]|.
 \label{eq:inj}
\end{equation}
For $1\le r\le p$, define the symmetric orthogonal-decomposition score
\begin{equation}
 S_r(T)=\sup_{U=(u_1,\ldots,u_r):\,U^\top U=I_r}
 \left\{\sum_{j=1}^rT[u_j,u_j,u_j]^2\right\}^{1/2}.
 \label{eq:sod-score}
\end{equation}
The estimator scans $\widehat W_{s,e}^t=S_r(\widehat C_{s,e}^t)$.

\begin{proposition}[Signal preservation and perturbation]
\label{prop:sod}
If $D$ is odeco of rank at most $r$, then $S_r(D)=\|D\|_{\F}$.  For arbitrary symmetric tensors $T,E$,
\begin{equation}
 |S_r(T+E)-S_r(T)|\le\sqrt r\,\norm[\inj]{E}.
 \label{eq:sod-lipschitz}
\end{equation}
\end{proposition}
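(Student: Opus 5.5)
The argument has two independent parts, both working directly from the definition \eqref{eq:sod-score}.

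\emph{Signal preservation.} Write $D=\sum_{j=1}^{r'}\lambda_jv_j^{\otimes3}$ with $r'\le r$ orthonormal $v_j$. Because the $v_j$ are orthonormal, $\|D\|_\F^2=\sum_j\lambda_j^2$. For the lower bound, complete $v_1,\ldots,v_{r'}$ to an orthonormal $r$-frame (possible since $r\le p$). Then $D[v_j,v_j,v_j]=\lambda_j$ for $j\le r'$ and $D[u,u,u]=0$ for any added column $u\perp v_1,\dots,v_{r'}$. This gives $S_r(D)\ge\|D\|_\F$.

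For the upper bound, take any orthonormal frame $U$ and put $M_{jl}=\langle u_j,v_l\rangle$. Then $D[u_j,u_j,u_j]=\sum_l\lambda_lM_{jl}^3$. By Cauchy--Schwarz,
\[
\Bigl(\sum_l\lambda_lM_{jl}^3\Bigr)^2\le\Bigl(\sum_l\lambda_l^2M_{jl}^2\Bigr)\Bigl(\sum_lM_{jl}^4\Bigr)\le\sum_l\lambda_l^2M_{jl}^2,
\]
where the last step uses $\sum_lM_{jl}^4\le\sum_lM_{jl}^2\le1$ (Bessel, since the $v_l$ are orthonormal and $\|u_j\|=1$). Summing over $j$ and applying Bessel again in the other direction ($\sum_jM_{jl}^2\le1$, since the $u_j$ are orthonormal and $\|v_l\|=1$) gives $\sum_jD[u_j,u_j,u_j]^2\le\sum_l\lambda_l^2=\|D\|_\F^2$. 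Hence $S_r(D)=\|D\|_\F$.

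\emph{Perturbation.} For a fixed frame $U$, define the vector $a_U(T)=(T[u_j,u_j,u_j])_{j=1}^r\in\R^r$, so that $S_r(T)=\sup_U\|a_U(T)\|_2$. The map $T\mapsto a_U(T)$ is linear. By the triangle inequality in $\R^r$,
\[
\bigl|\|a_U(T+E)\|_2-\|a_U(T)\|_2\bigr|\le\|a_U(E)\|_2.
\]
Each coordinate satisfies $|E[u_j,u_j,u_j]|\le\|E\|_\inj$ by \eqref{eq:inj}, so $\|a_U(E)\|_2\le\sqrt r\,\|E\|_\inj$. Since a supremum of functions that are each $\sqrt r\|E\|_\inj$-close to another family stays within the same distance, taking $\sup_U$ on both sides yields \eqref{eq:sod-lipschitz}. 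Note that the supremum is attained by compactness of the Stiefel manifold, though this is not needed.

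The only step requiring care is the upper bound in the first part, namely that no frame misaligned with the $v_l$ can exceed $\|D\|_\F$. The Cauchy--Schwarz plus double-Bessel argument above handles it, and it uses only $\sum_lM_{jl}^4\le1$ rather than any rotation-invariance property.
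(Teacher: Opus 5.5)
Your proof is correct and follows the paper's argument essentially step for step. It has the same three pieces: completing the $v_j$ to a frame for the lower bound; Cauchy--Schwarz with $\sum_l M_{jl}^4\le 1$, then Bessel over the frame, for the upper bound; and the fixed-frame reverse triangle inequality followed by taking suprema for the Lipschitz bound. Your explicit justification of $\sum_l M_{jl}^4\le\sum_l M_{jl}^2\le 1$ fills in a step the paper uses without comment.
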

The result replaces a best rank-$r$ CP approximation, which need not be well posed for a generic tensor, by a stable variational score tailored to odeco signals.

\subsection{Exact population tent and localization margin}
Suppose $(s,e]$ contains exactly one change $\eta$ from $A_0$ to $A_1$, and let $D=A_1-A_0$.  Linearity of expectation gives
\begin{equation}
 C_{s,e}^t=a_{\eta}^{s,e}(t)D,
 \label{eq:population-cusum}
\end{equation}
where
\begin{equation}
 a_{\eta}^{s,e}(t)=
 \begin{cases}
 \dfrac{e-\eta}{\sqrt{e-s}}\sqrt{\dfrac{t-s}{e-t}},&t<\eta,\\[1.2ex]
 \sqrt{\dfrac{(\eta-s)(e-\eta)}{e-s}},&t=\eta,\\[1.2ex]
 \dfrac{\eta-s}{\sqrt{e-s}}\sqrt{\dfrac{e-t}{t-s}},&t>\eta.
 \end{cases}
 \label{eq:tent-coefficient}
\end{equation}
Hence $S_r(C_{s,e}^t)=a_{\eta}^{s,e}(t)\kappa$ for an odeco rank-$r$ jump.  The maximum is unique at $\eta$.  More importantly, the squared margin is exact:
\begin{equation}
 S_r(C_{s,e}^{\eta})^2-S_r(C_{s,e}^{t})^2=
 \begin{cases}
 \dfrac{(e-\eta)(\eta-t)}{e-t}\kappa^2,&t<\eta,\\[1.2ex]
 \dfrac{(\eta-s)(t-\eta)}{t-s}\kappa^2,&t>\eta.
 \end{cases}
 \label{eq:exact-margin}
\end{equation}
This linear margin is the engine of all localization bounds.

\section{Tensor deviation inequalities}
\label{sec:concentration}

This section addresses the central technical obstacle: $H_3(X)$ is a tensor, so matrix Bernstein does not apply.  Flattening would also destroy the odeco geometry and can introduce an artificial dimension of order $d^2$.  We instead control the injective norm directly.

\subsection{Directional degree-three chaos}
For $q>0$, write $\|Y\|_{\psi_{2/3}}=\inf\{c>0:\E\exp(|Y|^{2/3}/c^{2/3})\le2\}$.  Fixed contractions of $H_3$ are total-degree-three polynomials.  The following dimension-free moment statement is proved in \cref{app:concentration} from polynomial-chaos hypercontractivity and the isometry in \cref{prop:isometry}.

\begin{lemma}[Directional chaos bound]
\label{lem:directional-chaos}
Under \cref{ass:main}, for every $B\in\Sym^3(\R^p)$ and every $q\ge2$,
\begin{equation}
 \left\|\inner{H_3(X)-\E H_3(X)}{B}_{\!\F}\right\|_{L^q}
 \le C_3\max\{1,\sqrt L\}\,q^{3/2}\norm[\F]{B}.
 \label{eq:chaos-moment}
\end{equation}
Equivalently, the centered contraction has $\psi_{2/3}$ norm at most
$\nu_L=C_3\max\{1,\sqrt L\}\|B\|_{\F}$.
\end{lemma}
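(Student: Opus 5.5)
We follow the route the statement indicates: reduce to one polynomial of total degree at most three, use hypercontractivity under the product-uniform law $\mu$, and then move from $\mu$ to the segment law $f$ through the density bound. Fix $B\in\Sym^3(\R^p)$ and set
\[
g(x)=\inner{H_3(x)}{B}_{\F}=\sum_{|\alpha|\le3}\inner{E_\alpha}{B}_{\F}\,\psi_\alpha(x).
\]
This is a polynomial of total degree at most three in the Legendre basis. By \cref{prop:isometry}, \eqref{eq:direction-isometry}, we have $\|g\|_{L^2(\mu)}=\|B\|_{\F}$.

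\textbf{Step 1: hypercontractivity under $\mu$.} The uniform measure on $[-1,1]$ is hypercontractive: the Legendre/Jacobi semigroup satisfies a Bonami--Beckner type inequality, and tensorization extends it to the product measure (cf.\ \citealp{mossel2010noise}). Hence for every polynomial $g$ of total degree at most three and every $q\ge2$,
\[
\|g\|_{L^q(\mu)}\le (C(q-1))^{3/2}\|g\|_{L^2(\mu)}.
\]
One can instead invoke the general fact that degree-$k$ polynomials in independent bounded, mean-zero-structured variables satisfy $\|g\|_q\le (Cq)^{k/2}\|g\|_2$. The only point to check is that the constant depends on the degree three but not on $d$. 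This follows because hypercontractivity constants tensorize across coordinates.

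\textbf{Step 2: transfer to $f$.} Since $0\le f\le L$, we get
\[
\E_f|g(X)|^q\le L\,\E_\mu|g|^q,
\qquad\text{so}\qquad
\|g\|_{L^q(f)}\le L^{1/q}\|g\|_{L^q(\mu)}\le L^{1/2}(Cq)^{3/2}\|B\|_{\F}
\]
for $q\ge2$, using $L\ge1$. For centering, the triangle inequality and Jensen give
\[
\|g-\E_f g\|_{L^q(f)}\le 2\|g\|_{L^q(f)}.
\]
Note that $\E_fH_3$ is exactly $A(f)$, so the centered contraction is $\inner{H_3(X)-\E H_3(X)}{B}_{\F}$. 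Collecting constants yields \eqref{eq:chaos-moment} with $\max\{1,\sqrt L\}$.

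\textbf{Step 3: the $\psi_{2/3}$ equivalence.} This is the standard moment characterization of Orlicz norms. If $\|Y\|_q\le Kq^{3/2}$ for all $q\ge2$, expand
\[
\E\exp\bigl(|Y|^{2/3}/c^{2/3}\bigr)=\sum_m \frac{\E|Y|^{2m/3}}{c^{2m/3}\,m!}.
\]
For the small moments, those with $2m/3<2$, bound $\E|Y|^{2m/3}$ by the $L^2$ moment. Then use $(2m/3)^{m}\le C^m m!$ to conclude that $c=C'K$ makes the sum at most $2$.

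\textbf{Main obstacle.} The hardest part is Step 1. A dimension-free hypercontractive estimate is needed for the uniform, non-Gaussian, non-Rademacher marginal, for polynomials of total rather than coordinatewise degree. I would first try the known hypercontractivity of the Jacobi (Legendre) semigroup, whose spectral gap gives $\phi_k$ eigenvalue $k(k+1)$. If the eigenvalue ordering $k(k+1)$ rather than $k$ complicates the degree bookkeeping, the fallback is a different argument. Represent $x_j$ as a bounded symmetric variable and use the general moment inequality for multilinear-type chaoses in independent symmetric bounded variables (via decoupling and the Rademacher Bonami inequality with a contraction principle). This again gives $q^{3/2}$ with an absolute constant.
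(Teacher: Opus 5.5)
\textbf{Verdict: genuine gap in Step 1.} Your Steps 2 and 3 match the paper's proof: the transfer $\|g\|_{L^q(f)}\le L^{1/q}\|g\|_{L^q(\mu)}$ with $L\ge1$, the factor two from centering, and the moment characterization of $\psi_{2/3}$. The gap is in Step 1, which is the real content of the lemma. Your primary route there does not give the exponent $3/2$.

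The Legendre generator $(1-x^2)\partial_x^2-2x\partial_x$ has $\phi_k$ as eigenfunction with eigenvalue $k(k+1)$ and spectral gap $2$. A log-Sobolev constant cannot exceed the spectral gap, so $\|P_t\|_{2\to q}\le 1$ requires at least $e^{4t}\ge q-1$. Undoing $P_t$ on a product component $\psi_\alpha$ costs $e^{t\Lambda_\alpha}$, where $\Lambda_\alpha=\sum_j\alpha_j(\alpha_j+1)$. Total degree three allows components with repeated coordinates: $\Lambda_\alpha=12$ for $\phi_3(x_j)$ and $\Lambda_\alpha=8$ for $\phi_2(x_i)\phi_1(x_j)$. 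So the best this argument gives is $\|g\|_q\lesssim(q-1)^{3}\|g\|_2$, i.e.\ only $\psi_{1/3}$ tails. No choice of constants fixes this, because the exponent comes from the quadratic growth of the eigenvalues.

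You raise this worry yourself but leave it unresolved. Your alternative ``general fact'' is not a usable black box either. For non-multilinear polynomials you must say what degree means. Without a uniform bound on the $L^q$ (or $L^\infty$) to $L^2$ ratio of one-coordinate features, no absolute constant exists; for centered log-concave product laws such as the exponential, the exponent is $k$, not $k/2$.

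The missing idea is the one the paper uses in \cref{lem:hypercontractive}: change the noise operator.
\begin{itemize}
\item For each coordinate, treat $(1,\phi_1(x_j),\phi_2(x_j),\phi_3(x_j))$ as an orthonormal ensemble in the sense of Mossel, O'Donnell and Oleszkiewicz. Damp every nonconstant element of a coordinate by the same factor $\eta$, whatever its Legendre degree.
\item A total-degree-at-most-three Legendre expansion is then a multilinear polynomial over these ensembles. Its degree is the number of distinct coordinates involved, which is at most three.
\item The one-coordinate parameter comes from the envelope $\sum_{k=1}^3\phi_k(x)^2\le 15$ on $[-1,1]$. By Cauchy--Schwarz this gives $\|G\|_\infty\le\sqrt{15}\,\|G\|_2$ for every centered $G$ in the span, hence $\eta_q\ge(2\sqrt{15}\sqrt{q-1})^{-1}$.
\item Tensorization and the degree bound give the dimension-free estimate $\|g\|_q\le\eta_q^{-3}\|g\|_2\le(2\sqrt{15})^3(q-1)^{3/2}\|g\|_2$.
\end{itemize}

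Your decoupling fallback can be made to work, but not as written. $g$ is not multilinear in the symmetric variables $x_j$, and $\phi_2(x_j)$ is not symmetric. You would need to:
\begin{itemize}
\item split $g$ by coordinate support and work with the vector features $Y_j=(\phi_1,\phi_2,\phi_3)(x_j)$;
\item decouple, symmetrize with Rademacher signs, and apply Bonami conditionally;
\item bound the conditional $L^2$ norm of the resulting Rademacher chaos deterministically.
\end{itemize}
That last step needs exactly the same envelope $\|Y_j\|_2^2\le 15$. This uniform envelope is the ingredient your Step 1 is missing.
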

The exponent $3/2$ is the characteristic moment growth of an order-three chaos.  Treating the contraction as merely bounded would yield dimension-dependent envelopes and a much weaker result.

\subsection{Weighted injective-norm concentration}
\begin{theorem}[Weighted degree-three tensor Bernstein inequality]
\label{thm:tensor-bernstein}
Let $X_1,\ldots,X_N$ be independent, not necessarily identically distributed, and suppose all their densities relative to $\mu$ are bounded by $L$.  Put $Z_i=H_3(X_i)-\E H_3(X_i)$.  For deterministic weights $w\in\R^N$ and $0<\delta<1$, define
\begin{equation}
 \rho(p,\delta)=3p\log17+\log(2/\delta).
 \label{eq:rho}
\end{equation}
Then, with probability at least $1-\delta$,
\begin{equation}
 \left\|\sum_{i=1}^Nw_iZ_i\right\|_{\inj}
 \le C\nu_L\left\{\sqrt{\rho(p,\delta)}\norm[2]{w}
 +\rho(p,\delta)^{3/2}\norm[\infty]{w}\right\}.
 \label{eq:tensor-bernstein}
\end{equation}
\end{theorem}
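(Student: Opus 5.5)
The proof reduces the injective norm to finitely many fixed contractions, controls each by a scalar sub-Weibull Bernstein inequality, and then takes a union bound over a three-sphere net.

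\textbf{Step 1: fixed contraction.} Fix unit vectors $u,v,w'\in\mathbb S^{p-1}$ and let $B=\Sym(u\otimes v\otimes w')$, so that
\[
\inner{Z_i}{B}_{\F}=Z_i[u,v,w'] \quad\text{and}\quad \|B\|_{\F}\le1 .
\]
By \cref{lem:directional-chaos}, each $Y_i=w_iZ_i[u,v,w']$ is centered and independent, with $\|Y_i\|_{\psi_{2/3}}\le\nu_L|w_i|$. A Bernstein--Orlicz inequality for sums of independent $\psi_\alpha$ variables with $\alpha=2/3<1$ (e.g.\ Kuchibhotla--Chakrabortty, or Bong--Kuchibhotla) then gives, for every $t>0$,
\[
\Pp\Big(\Big|\sum_iY_i\Big|\ge C\nu_L\big\{\sqrt t\,\|w\|_2+t^{3/2}\|w\|_\infty\big\}\Big)\le2e^{-t}.
\]
The two regimes come from the $L^2$ variance term and the $\psi_{2/3}$ tail of the largest summand. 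If the sharp form of that inequality carries an extra $\log N$ in the tail term, I would instead apply the moment version directly: bound $\|\sum_i Y_i\|_{L^q}\lesssim\sqrt q\,\|w\|_2\nu_L+q^{3/2}\|w\|_\infty\nu_L$ using Lata{\l}a-type moment bounds for symmetric variables with log-concave-type tails, after symmetrization. Then take $q=t$ and apply Markov's inequality.

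\textbf{Step 2: net.} Let $\mathcal N$ be a $1/8$-net of $\mathbb S^{p-1}$ with $|\mathcal N|\le17^p$. For a symmetric (or even general) trilinear form $T$, the standard argument compares $T[u,v,w]$ with the value at the nearest net points, changing one argument at a time. This gives
\[
\|T\|_{\inj}\le(1-3/8)^{-1}\max_{\mathcal N^3}|T[a,b,c]|\le 2\max_{\mathcal N^3}|T[a,b,c]| .
\]
The union over $\mathcal N^3$ has at most $17^{3p}$ triples. Applying Step 1 to each triple with $t=\rho(p,\delta)=3p\log17+\log(2/\delta)$ gives total failure probability at most $17^{3p}\cdot2e^{-t}=\delta$. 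This is exactly how $\rho$ in \eqref{eq:rho} is calibrated. Note that the net points enter through $B=\Sym(a\otimes b\otimes c)$ rather than $a^{\otimes3}$, which is why Step 1 uses general triples.

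\textbf{Main obstacle.} The delicate point is Step 1: obtaining a clean Bernstein inequality for $\psi_{2/3}$ summands with heterogeneous weights, whose tail term scales as $t^{3/2}\|w\|_\infty$ and not as $t^{3/2}\max_i\|Y_i\|_{\psi_{2/3}}\sqrt{\log N}$. My first attempt would be to cite the generalized Bernstein--Orlicz bound of Kuchibhotla and Chakrabortty (2022, Theorem 3.1), whose $\alpha\le1$ case has tail term $t^{1/\alpha}\|(\|Y_i\|_{\psi_\alpha})_i\|_\infty$ up to constants depending only on $\alpha$. This yields exactly the $\|w\|_\infty$ term, and any constant depending on $\alpha=2/3$ is absorbed into $C$. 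A lesser check is that \cref{lem:directional-chaos} applies to non-identically distributed $X_i$, since each density is bounded by $L$.
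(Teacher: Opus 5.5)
Your proposal is correct and follows essentially the same route as the paper. It uses a $1/8$-net on each sphere, giving at most $17^{3p}$ triples, and controls the contractions against $B=\Sym(u\otimes v\otimes z)$ through \cref{lem:directional-chaos} together with the weighted $\psi_{2/3}$ Bernstein--Orlicz inequality at level $x=\rho(p,\delta)$. It then closes with the telescoping bound $\|T\|_{\inj}\le(1-3/8)^{-1}\max_{\cN^3}|T[u_0,v_0,z_0]|$, exactly as the paper does; your fallback moment route is not needed, since the cited inequality already carries the $\|w\|_\infty$ tail term (and for $\psi_{2/3}$ summands the tails are log-convex rather than log-concave).
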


The proof combines the tight weighted sub-Weibull inequality of \citet{bong2023tight} with a $1/8$-net on each of the three unit spheres.  The net has at most $17^{3p}$ triples, and multilinearity converts the net maximum to the full injective norm with a constant factor.  No coordinate union over $p^3$ tensor entries is used.

For a sample mean, \cref{thm:tensor-bernstein} yields
\begin{equation}
 \left\|\frac1N\sum_{i=1}^NZ_i\right\|_{\inj}
 \le C\nu_L\left\{\sqrt{\frac{p+x}{N}}+\frac{(p+x)^{3/2}}{N}\right\}
 \label{eq:mean-bound}
\end{equation}
with probability at least $1-2e^{-x}$, after changing constants.  The first term is the Gaussian-process scale; the second is the single-sample order-three scale.  These are the same $N$ and effective-dimension powers appearing in sharp simple-tensor concentration \citep{alghattas2025sharp,abdalla2025dimensionfree}.

For a finite scan family $\cG$ of triples $(s,t,e)$, let $M=|\cG|$ and
\begin{equation}
 \rhoScan=3p\log17+\log(2M/\delta).
 \label{eq:rho-scan}
\end{equation}
If every candidate is $\alpha$-balanced, meaning
$\min\{t-s,e-t\}\ge\alpha(e-s)$, then \cref{eq:cusum-weights} and a union bound give
\begin{corollary}[Uniform CUSUM deviation]
\label{cor:uniform-cusum}
With probability at least $1-\delta$, simultaneously for all $(s,t,e)\in\cG$,
\begin{equation}
 \norm[\inj]{E_{s,e}^t}
 \le \lambda_{\mathrm{full}}(e-s)
 :=C\nu_L\left\{\sqrt{\rhoScan}
 +\frac{\rhoScan^{3/2}}{\sqrt{\alpha(e-s)}}\right\}.
 \label{eq:full-cusum-bound}
\end{equation}
Consequently,
$|S_r(\widehat C_{s,e}^t)-S_r(C_{s,e}^t)|\le\sqrt r\,\lambda_{\mathrm{full}}(e-s)$.
\end{corollary}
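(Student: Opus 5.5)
We apply \cref{thm:tensor-bernstein} to each scan triple separately, with a reduced confidence level, and then take a union bound. Fix $(s,t,e)\in\cG$. The deviation $E_{s,e}^t$ is exactly $\sum_{i=s+1}^e w_iZ_i$ with $Z_i=H_3(X_i)-\E H_3(X_i)$. The deterministic CUSUM weights are $w_i=-\sqrt{(e-t)/\{(e-s)(t-s)\}}$ for $s<i\le t$ and $w_i=\sqrt{(t-s)/\{(e-s)(e-t)\}}$ for $t<i\le e$. These observations are independent and each has a density bounded by $L$ under \cref{ass:main}, so the hypotheses of the theorem hold with $N=e-s$.

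We invoke the theorem with confidence level $\delta/M$. Then $\rho(p,\delta/M)=3p\log17+\log(2M/\delta)=\rhoScan$, and with probability at least $1-\delta/M$,
\[
\norm[\inj]{E_{s,e}^t}\le C\nu_L\bigl\{\sqrt{\rhoScan}\,\norm[2]{w}+\rhoScan^{3/2}\norm[\infty]{w}\bigr\}.
\]
By \cref{eq:cusum-weights}, $\norm[2]{w}=1$. For the sup-norm term we use $\alpha$-balance: $\max\{t-s,e-t\}\le e-s$ and $\min\{t-s,e-t\}\ge\alpha(e-s)$. Hence
\[
\norm[\infty]{w}^2\le\frac{e-s}{(e-s)\,\alpha(e-s)}=\frac{1}{\alpha(e-s)},
\]
which gives precisely $\lambda_{\mathrm{full}}(e-s)$. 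A union bound over the $M$ triples raises the failure probability to at most $\delta$, so the bound holds simultaneously for all triples.

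For the score statement we apply \cref{prop:sod} with $T=C_{s,e}^t$ and $E=E_{s,e}^t$. Both are symmetric tensors, since they are linear combinations of symmetric $H_3$ values and their expectations. On the same event this gives $|S_r(\widehat C_{s,e}^t)-S_r(C_{s,e}^t)|\le\sqrt r\,\norm[\inj]{E_{s,e}^t}\le\sqrt r\,\lambda_{\mathrm{full}}(e-s)$.

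There is no real obstacle. The only care points are these: the constant $C$ must be the same absolute constant as in \cref{thm:tensor-bernstein}; $\nu_L$ is $\nu_L$ with $\|B\|_\F=1$, the normalization used inside that theorem; and the theorem must be applied with the level $\delta/M$, so that the logarithmic term inside $\rho$ matches $\rhoScan$ exactly.
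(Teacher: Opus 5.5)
Your proposal is correct and follows the paper's own proof: apply \cref{thm:tensor-bernstein} to each triple at level $\delta/M$ so that $\rho(p,\delta/M)=\rhoScan$, use $\|w\|_2=1$ and the balance bound $\|w\|_\infty\le(\alpha(e-s))^{-1/2}$ from \cref{eq:cusum-weights}, take a union bound over the $M$ triples, and finish with \cref{prop:sod}. Your explicit CUSUM weights and the identity $\|w\|_\infty^2=\max\{t-s,e-t\}/\{(e-s)\min\{t-s,e-t\}\}$ fill in the step the paper states in one line.
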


\subsection{Coordinate-odeco specialization}
The full injective norm is statistically and computationally demanding when $N$ is only proportional to $d$.  A useful exact specialization occurs when the jump is diagonal in the canonical tensor basis:
\begin{equation}
 D=\sum_{j=1}^d\delta_j e_j^{\otimes3},\qquad \|\delta\|_0\le r.
 \label{eq:diagonal-jump}
\end{equation}
The diagonal entries of $H_3(X)$ are simply $\phi_3(X_j)$.  Let $z(X)=(\phi_3(X_1),\ldots,\phi_3(X_d))$ and let $\widehat c_{s,e}^t$ be its vector CUSUM.  Define
\begin{equation}
 S_r^{\mathrm{diag}}(c)=\left(\sum_{j=1}^r|c|_{(j)}^2\right)^{1/2},
 \label{eq:diag-score}
\end{equation}
where $|c|_{(1)}\ge\cdots\ge|c|_{(d)}$.  For \cref{eq:diagonal-jump}, this equals the full odeco score at the population tensor.  Since $|\phi_3(x)|\le\sqrt7$, Hoeffding's inequality yields a substantially smaller noise level.

\begin{corollary}[Diagonal-odeco scan]
\label{cor:diagonal}
For a finite scan family of size $M$, with probability at least $1-\delta$,
\begin{equation}
 \sup_{(s,t,e)\in\cG}
 \left|S_r^{\mathrm{diag}}(\widehat c_{s,e}^t)-
 S_r^{\mathrm{diag}}(c_{s,e}^t)\right|
 \le C\sqrt{r\log(2dM/\delta)}.
 \label{eq:diag-bound}
\end{equation}
\end{corollary}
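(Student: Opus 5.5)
The plan is to reduce the score comparison to a coordinatewise sup-norm bound and then control that sup-norm with Hoeffding plus a union bound.

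\textbf{Deterministic step.} For each $(s,t,e)\in\cG$, write the vector CUSUM as $\widehat c_{s,e}^t=\sum_{i=s+1}^e w_i z(X_i)$, where the weights $w_i$ are those of \cref{eq:cusum-weights}. Its mean is $c_{s,e}^t=\sum_i w_i\E z(X_i)$. The map $S_r^{\mathrm{diag}}$ is the Euclidean norm of the $r$ largest-magnitude entries, so
\[
S_r^{\mathrm{diag}}(c)=\max_{|J|=r}\Bigl(\sum_{j\in J}c_j^2\Bigr)^{1/2}.
\]
It is therefore a maximum of seminorms, each $1$-Lipschitz for the norm $\|c\|_{(r)}:=\max_{|J|=r}\|c_J\|_2\le\sqrt r\,\|c\|_\infty$. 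It follows that
\[
|S_r^{\mathrm{diag}}(\widehat c)-S_r^{\mathrm{diag}}(c)|\le\sqrt r\,\|\widehat c-c\|_\infty .
\]
This is the diagonal analogue of \cref{prop:sod}.

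\textbf{Stochastic step.} Fix a triple $(s,t,e)$ and a coordinate $j$. The quantity $\widehat c_j-c_j=\sum_i w_i\{\phi_3(X_{ij})-\E\phi_3(X_{ij})\}$ is a sum of independent centered terms. The $i$-th term lies in an interval of length $2\sqrt7|w_i|$, because $|\phi_3|\le\sqrt7$ on $[-1,1]$. Hoeffding's inequality together with $\sum_i w_i^2=1$ gives
\[
\Pp(|\widehat c_j-c_j|>u)\le 2\exp(-u^2/14).
\]
A union bound over the $d$ coordinates and $M$ triples, with $u=\sqrt{14\log(2dM/\delta)}$, yields $\|\widehat c-c\|_\infty\le u$ simultaneously for all triples with probability at least $1-\delta$. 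Combining this with the Lipschitz step gives \cref{eq:diag-bound} with $C=\sqrt{14}$.

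Neither the balance condition nor the density bound $L$ is needed here; only boundedness of $\phi_3$ and independence are used.

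\textbf{Main obstacle.} The only step needing care is the Lipschitz claim, since $S_r^{\mathrm{diag}}$ involves order statistics and is not linear. Writing it as a maximum over index sets $J$ of norms avoids this. For each $J$, the triangle inequality gives $\|\widehat c_J\|_2\le\|c_J\|_2+\sqrt r\,\|\widehat c-c\|_\infty$, and the same holds with $\widehat c$ and $c$ swapped. Taking the maximum over $J$ on both sides gives the two-sided bound.
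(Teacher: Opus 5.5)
Your proof is correct and follows the paper's own argument: weighted Hoeffding using $\sum_i w_i^2=1$ and $|\phi_3|\le\sqrt7$, a union bound over the $dM$ coordinate--triple pairs, and $\sqrt r$-Lipschitzness of the top-$r$ Euclidean norm with respect to $\ell_\infty$. Beyond the paper, you write out the Lipschitz step as a maximum over index sets $J$ and give the explicit constant $C=\sqrt{14}$.
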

The diagonal method uses $d$ prefix sums, costs $O(nd)$ to construct, and evaluates each candidate in $O(d)$ time.  The $\sqrt{\log d}$ rather than $\sqrt d+d^{3/2}/\sqrt N$ dependence is what permits the high-dimensional experiments in \cref{sec:experiments}.

\begin{remark}[Why the two bounds are both needed]
\label{rem:two-bounds}
The full theorem addresses arbitrary odeco directions and is the correct tensor analogue of a matrix operator-norm bound.  The diagonal theorem imposes stronger structure but is exact, not an approximation, for coordinate-odeco jumps.  The stress experiment in \cref{app:experiments} shows the distinction: at $d=100,n=800$, an unregularized full-tensor score is dominated by the $d^{3/2}/\sqrt n$ regime, whereas the diagonal score follows the population tent.
\end{remark}

\section{Localization and multiple change points}
\label{sec:localization}

\subsection{Single-change localization}
The following deterministic theorem applies with either the full score or the diagonal score.  Let $\eps_m$ denote a uniform score error on an interval of length $m$: $\eps_m=\sqrt r\lambda_{\mathrm{full}}(m)$ for the full method and $\eps_m=C\sqrt{r\log(2dM/\delta)}$ for the diagonal method.

\begin{theorem}[One-change localization]
\label{thm:single}
Suppose $(s,e]$ contains exactly one change $\eta$, the candidate set contains $\eta$, and
\begin{equation}
 \min\{\eta-s,e-\eta\}\ge\alpha m,
 \qquad m=e-s,
 \label{eq:balance-change}
\end{equation}
for some $0<\alpha\le1/2$.  On an event where the score error is at most $\eps_m$ uniformly over candidates, any maximizer $\widehat\eta$ satisfies
\begin{equation}
 |\widehat\eta-\eta|
 \le \frac{2\sqrt m\,\eps_m}{\alpha\kappa}.
 \label{eq:single-rate}
\end{equation}
Moreover, a threshold $\tau$ distinguishes this interval from a null interval whenever
\begin{equation}
 \eps_m<\tau<\alpha\kappa\sqrt m-\eps_m.
 \label{eq:threshold-window}
\end{equation}
\end{theorem}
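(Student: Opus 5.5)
The plan is to work entirely with the population tent of \cref{eq:tent-coefficient} and the exact squared margin \cref{eq:exact-margin}, treating the random score as a uniform $\eps_m$-perturbation of it. Write $W(t)=S_r(C_{s,e}^t)$ and $\widehat W(t)$ for the empirical score (full or diagonal). Since $(s,e]$ contains exactly one change, \cref{eq:population-cusum} together with \cref{prop:sod} gives $W(t)=a_\eta^{s,e}(t)\kappa$. For the diagonal method the same identity holds because $S_r^{\mathrm{diag}}$ agrees with $S_r$ at the population diagonal-odeco tensor. On the stated event, $|\widehat W(t)-W(t)|\le\eps_m$ for every candidate $t$.

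\emph{Step 1 (near-optimality of the maximizer).} Since $\eta$ is a candidate and $\widehat\eta$ maximizes $\widehat W$, we have $W(\widehat\eta)\ge\widehat W(\widehat\eta)-\eps_m\ge\widehat W(\eta)-\eps_m\ge W(\eta)-2\eps_m$. Using $0\le W(\widehat\eta)\le W(\eta)$ (the tent peaks at $\eta$), this converts to a bound on the squared gap:
\[
W(\eta)^2-W(\widehat\eta)^2=\{W(\eta)-W(\widehat\eta)\}\{W(\eta)+W(\widehat\eta)\}\le 4\eps_m W(\eta).
\]
Moreover $W(\eta)=\kappa\sqrt{(\eta-s)(e-\eta)/m}\le\kappa\sqrt m/2$, so the right-hand side is at most $2\eps_m\kappa\sqrt m$.

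\emph{Step 2 (lower bound from the margin).} Take the case $\widehat\eta<\eta$. By \cref{eq:exact-margin}, and since $e-t\le m$ and $e-\eta\ge\alpha m$ by \cref{eq:balance-change},
\[
W(\eta)^2-W(\widehat\eta)^2=\frac{(e-\eta)(\eta-\widehat\eta)}{e-\widehat\eta}\kappa^2\ge\alpha(\eta-\widehat\eta)\kappa^2.
\]
The case $\widehat\eta>\eta$ is symmetric, using $\eta-s\ge\alpha m$ and $t-s\le m$. Combining with Step 1 gives $\alpha\kappa^2|\widehat\eta-\eta|\le2\eps_m\kappa\sqrt m$, which is exactly \cref{eq:single-rate}. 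The main point requiring care is this step: one must use the linear (not square-root) form of the margin and bound the denominator $e-t$ by $m$ rather than by something $t$-dependent.

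\emph{Step 3 (threshold).} On a null interval every population CUSUM vanishes, so $W\equiv0$ and $\max_t\widehat W(t)\le\eps_m<\tau$. On the one-change interval,
\[
\max_t\widehat W(t)\ge W(\eta)-\eps_m=\kappa\sqrt{(\eta-s)(e-\eta)/m}-\eps_m.
\]
By \cref{eq:balance-change} and $\alpha\le1/2$, we have $(\eta-s)(e-\eta)\ge\alpha(1-\alpha)m^2\ge\alpha^2m^2$. Hence $\max_t\widehat W(t)\ge\alpha\kappa\sqrt m-\eps_m>\tau$, so the window \cref{eq:threshold-window} separates the two cases.
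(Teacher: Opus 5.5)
Your proof is correct and follows the paper's argument essentially step for step. It combines the same three ingredients: the $2\varepsilon_m$ score gap from empirical maximality, the linear lower bound $\alpha\kappa^2|t-\eta|$ on the exact squared margin, and $W(\eta)+W(t)\le 2W(\eta)\le\kappa\sqrt m$. The only difference is cosmetic: you multiply the score gap by $W(\eta)+W(\widehat\eta)$, where the paper divides the squared margin by it. Your explicit check that $(\eta-s)(e-\eta)\ge\alpha(1-\alpha)m^2\ge\alpha^2m^2$ supplies the justification for $W(\eta)\ge\alpha\kappa\sqrt m$, which the paper asserts without proof.
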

The proof uses \cref{eq:exact-margin}: the squared margin is at least $\alpha\kappa^2|t-\eta|$, while the sum of the two population scores is at most $\kappa\sqrt m$.  Empirical maximization loses at most $2\eps_m$ in score.

\subsection{Seeded intervals and padded recentering}
Fix a base scale $h$.  At layer $j$, use intervals of length $\ell_j=\min(2^jh,n)$ with starts separated by $\lfloor\ell_j/2\rfloor$.  Denote the union by $\cJ(h)$.  Detection candidates are restricted to the central half of each interval.  The elementary half-overlap geometry implies that every point at distance at least $3h/4$ from the current recursive boundaries belongs to a length-$h$ seed with distance at least $h/4$ from both endpoints.

For each $I=(s,e]\in\cJ(h)$, compute
\begin{equation}
 \widehat b_I\in\argmaxop_{s+(e-s)/4\le t\le e-(e-s)/4}\widehat W_{s,e}^t,
 \qquad \widehat a_I=\widehat W_{s,e}^{\widehat b_I}.
 \label{eq:interval-summary}
\end{equation}
A shortest active seed has length $h$, but its true change need not lie in the central half.  We therefore use a second, local scan.  Write $g=h/8$ (integer rounding changes only constants).  If $I^*=(u,v]$ is selected inside the current segment $(s,e]$, set
\begin{equation}
 u^+=\max\{s,u-g\},\qquad v^+=\min\{e,v+g\},
 \qquad
 \widehat b^+\in\argmaxop_{u^++g\le t\le v^+-g}\widehat W_{u^+,v^+}^{t}.
 \label{eq:padded-recentering}
\end{equation}
The recursive procedure is given in \cref{alg:seeded}.

\begin{algorithm}[t]
\caption{\Seeded$(s,e,\cJ(h),\tau)$ with padded recentering}
\label{alg:seeded}
\begin{algorithmic}[1]
\State $\cM_{s,e}\gets\{I=(u,v]\in\cJ(h): (u,v]\subset(s,e],\ \widehat a_I>\tau\}$
\If{$\cM_{s,e}=\varnothing$}
  \State \Return
\EndIf
\State Choose $I^*=(u,v]\in\argminop_{I\in\cM_{s,e}}|I|$; break ties by larger $\widehat a_I$
\State $g\gets h/8$, $u^+\gets\max\{s,u-g\}$, and $v^+\gets\min\{e,v+g\}$
\State $\widehat b^+\gets\argmaxop_{u^++g\le t\le v^+-g}\widehat W_{u^+,v^+}^{t}$
\State Append $\widehat b^+$ to $\widehat\cB$
\State \Seeded$(s,\widehat b^+,\cJ(h),\tau)$
\State \Seeded$(\widehat b^+,e,\cJ(h),\tau)$
\end{algorithmic}
\end{algorithm}

\subsection{Induction theorem for multiple changes}
Let $\cG_h^+$ be a deterministic family containing all detection triples queried on seeded intervals and all padded triples in \cref{eq:padded-recentering}; it is enough to take all integer triples with padded length in $[h,5h/4]$ and candidate distance at least $h/8$, together with the seeded detection triples.  Thus $|\cG_h^+|\le n^3+|\cJ(h)|n$.  Let $\eps_h$ be the worst uniform score error over $\cG_h^+$.  For the full method, \cref{cor:uniform-cusum} applies with balance constant $1/10$ and the smallest length $h$; for the diagonal method use \cref{cor:diagonal}.  Define
\begin{equation}
 R_k=C_{\mathrm{loc}}\frac{\sqrt h\,\eps_h}{\kappa_k}.
 \label{eq:Rk}
\end{equation}

\begin{theorem}[Multiple-change consistency by induction]
\label{thm:multiple}
Suppose \cref{ass:main} holds, $h\le3\Delta/4$, and the scan family contains every integer central candidate at the base layer.  There are universal constants $C_0,C_1,c_0>0$ such that the following conditions are sufficient:
\begin{equation}
 C_0\eps_h\le\tau\le c_0\kappa\sqrt h,
 \qquad
 \kappa\sqrt h\ge C_1\eps_h,
 \qquad
 \max_kR_k\le h/8.
 \label{eq:multiple-conditions}
\end{equation}
On the uniform deviation event, Algorithm~\ref{alg:seeded} returns exactly $K$ estimators, and after ordering them,
\begin{equation}
 |\widehat\eta_k-\eta_k|\le R_k,
 \qquad k=1,\ldots,K.
 \label{eq:multiple-rate}
\end{equation}
Consequently, with $\eps_h$ chosen from \cref{cor:uniform-cusum} or \cref{cor:diagonal}, the conclusion holds with probability at least $1-\delta$.
\end{theorem}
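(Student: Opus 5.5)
The argument is deterministic on the uniform deviation event $\mathcal E$, where every triple in $\cG_h^+$ has score error at most $\eps_h$. On $\mathcal E$ we run an induction over the recursion tree of \cref{alg:seeded}. The invariant for a call $\Seeded(s,e,\cdot,\tau)$ is:
\begin{itemize}[leftmargin=1.4em,itemsep=1pt,topsep=2pt]
\item[(a)] each endpoint is either $0$, $n$, or an earlier estimate $\widehat\eta_k$ with $|\widehat\eta_k-\eta_k|\le R_k\le h/8$;
\item[(b)] the true changes strictly inside $(s,e]$ are exactly those not yet estimated, and each is at distance at least $\Delta-h/8\ge 3h/4-h/8$ (adjusted through $h\le3\Delta/4$) from $s$ and $e$.
\end{itemize}
The root call satisfies this trivially. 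The probability statement then follows from \cref{cor:uniform-cusum} or \cref{cor:diagonal}, since $\cG_h^+$ is deterministic and has cardinality at most polynomial in $n$.

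\textbf{Step 1: null segments stay inactive.} Suppose $(s,e]$ contains no undetected change. Any seed $I\subset(s,e]$ can still meet a true change, but only one already estimated and lying within $R_k\le h/8$ of an endpoint. Inside $I$ the population CUSUM is then a tent whose change sits at distance at most $h/8$ from an edge. By \cref{eq:tent-coefficient}, $a_\eta^{u,v}(t)\le\sqrt{\min\{\eta-u,v-\eta\}}$, so the population score is at most $\kappa_k\sqrt{R_k}$. We need this to be at most a small multiple of $\tau$. From $R_k=C_{\rm loc}\sqrt h\,\eps_h/\kappa_k$,
\[
\kappa_k\sqrt{R_k}=\sqrt{C_{\rm loc}\kappa_k\sqrt h\,\eps_h}.
\]
This is not automatically below $\tau$ when $\kappa_k$ is large, so we sharpen it. A contamination of $\rho$ points near the edge $u$ gives coefficient roughly $\rho/\sqrt{v-u}$, hence score at most $\kappa_k R_k/\sqrt{|I|}\le C_{\rm loc}\eps_h\sqrt{h/|I|}\le C_{\rm loc}\eps_h$. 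Adding $\eps_h$ and taking $C_0>C_{\rm loc}+1$ gives $\widehat a_I\le\tau$. The algorithm therefore returns with no spurious detections.

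\textbf{Step 2: active segments have an active seed.} Suppose an undetected change $\eta_k$ lies in $(s,e]$. By (b) and the half-overlap geometry of $\cJ(h)$, some length-$h$ seed $I\subset(s,e]$ has $\eta_k$ at distance at least $h/4$ from both of its endpoints. Because $h\le3\Delta/4$, this seed contains no other change, apart from possible edge contamination from estimated changes, which is controlled as in Step 1 and does not disturb the tent on the central half. Applying \cref{thm:single} with $\alpha=1/4$ gives $\widehat a_I\ge\kappa\sqrt h/4-2\eps_h>\tau$, once $c_0$ is small and $C_1$ is large. So $\cM_{s,e}\neq\varnothing$ and all selected seeds have length $h$.

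\textbf{Step 3: the shortest active seed isolates one change.} A length-$h$ seed cannot contain two undetected changes, since any two changes are at least $\Delta>h$ apart. If it contained none, Step 1 would make it inactive. Hence $I^*$ contains exactly one undetected change $\eta_k$, possibly close to an edge, plus edge contamination of size at most $h/8$. A further check: any undetected change must have population score large enough to activate $I^*$. Its score is about $\kappa_k\min\{\eta-u,v-\eta\}/\sqrt h$, so $\eta_k$ lies at distance $\gtrsim\tau\sqrt h/\kappa_k\gtrsim$ (a few) $R_k$ from the edges.

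\textbf{Step 4: padded recentering.} Padding by $g=h/8$ gives an interval of length in $[h,5h/4]$. Its truncations at $s$ and $e$ are harmless because $\eta_k$ is at least $3h/4-h/8$ from them. In this interval $\eta_k$ is at distance at least $g$ from both ends, so it is $1/10$-balanced. The padding region can contain no other undetected change, since $\Delta\ge4h/3$. Estimated-change contamination of at most $h/8$ enters only as a bounded perturbation of the exact tent; handling it is the main obstacle. I would write $C=a(t)D_k+\Xi(t)$, where $\Xi$ comes from the contaminating jump and satisfies $S_r(\Xi)\lesssim\kappa_jR_j/\sqrt h\lesssim\eps_h$, and absorb it into the score error. \cref{thm:single} with $\alpha=1/10$ and error $O(\eps_h)$ then gives $|\widehat b^+-\eta_k|\le C\sqrt h\,\eps_h/\kappa_k=R_k$ for suitable $C_{\rm loc}$. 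The new boundary satisfies (a). Each child segment keeps its remaining undetected changes at distance $\ge\Delta-R_k\ge\Delta-h/8$, so (b) is restored.

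Since each call either stops, by Step 1, or detects a new distinct change, the recursion terminates. Exactly $K$ estimates are produced, each within $R_k$ of its change, which is \cref{eq:multiple-rate}.
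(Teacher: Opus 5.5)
Your overall route is the paper's: induction on detections with a boundary-margin invariant, inactivity of change-free seeds via an edge-contamination bound, an active $1/4$-balanced length-$h$ seed from the half-overlap geometry, the shortest active seed isolating one change, and padded recentering with balance $1/10$ fed into \cref{thm:single}. The genuine problem is Step~4, together with the matching caveats in Steps~2--3. There you allow the padded interval to contain an already-estimated change $\eta_j$, and you propose to absorb the resulting perturbation $\Xi$ into the score error. That does not close the induction, because the perturbation itself scales with $C_{\mathrm{loc}}$. Indeed, $S_r(\Xi)$ is of order $\kappa_jR_j/\sqrt h=C_{\mathrm{loc}}\eps_h$, with a numerical factor $c$ that is not small: at the candidate $u^++g$ the weight on $\rho$ edge points is $\frac{\rho}{\sqrt m}\sqrt{(m-g)/g}\ge\sqrt7\,\rho/\sqrt{5h/4}$. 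Feeding the error $(1+cC_{\mathrm{loc}})\eps_h$ into \cref{thm:single} with $\alpha=1/10$ and $m\ge h$ gives a radius of at least $20(1+cC_{\mathrm{loc}})\sqrt h\,\eps_h/\kappa_k$. Since $20c>1$, this exceeds $R_k=C_{\mathrm{loc}}\sqrt h\,\eps_h/\kappa_k$ for every choice of $C_{\mathrm{loc}}$. Contrast Step~1: there edge contamination is real, but it is absorbed by $C_0$, which is chosen after $C_{\mathrm{loc}}$. In Step~4 it would have to be absorbed by $C_{\mathrm{loc}}$ itself.

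The missing observation, which is exactly what the paper uses, is that this case never occurs. $I^*$ has length $h$, and the padded interval has length at most $5h/4<4h/3\le\Delta$. Both contain $\eta_k$, and every other true change lies at distance at least $\Delta$ from $\eta_k$. So neither interval contains any other true change, detected or not; you applied $\Delta\ge4h/3$ only to undetected ones. Hence the population CUSUM on the padded interval is an exact single tent, \cref{thm:single} applies with error $\eps_h$ alone, and a fixed numerical $C_{\mathrm{loc}}$ (e.g.\ $10\sqrt5$) suffices. The same remark removes the contamination caveats in Steps~2 and~3 and makes your extra distance check in Step~3 unnecessary.

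Relatedly, invariant (b) as written (``the true changes strictly inside $(s,e]$ are exactly those not yet estimated'') is false. An estimated $\eta_j$ can fall inside a child segment, within $R_j$ of the new boundary. State instead, as in the paper's clause (iii), that every detected change inside a segment lies within $R_j$ of one of its endpoints; this is what your Step~1 actually uses. Also note that a long seed can meet such contamination at both ends, so the two contributions must be summed as in \cref{lem:contamination}.
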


The complete induction is in \cref{app:multiple}.  Its three essential steps are worth stating explicitly.
\begin{enumerate}[leftmargin=1.4em,itemsep=2pt]
\item \textbf{No undetected change implies inactivity.}  A scanned interval with no undetected change may contain at most $R_k$ observations from an adjacent regime at either inherited recursive boundary.  Frobenius geometry bounds its population score by $C\kappa_kR_k/\sqrt h=O(\eps_h)$.  The lower threshold in \cref{eq:multiple-conditions} therefore prevents duplicate detections.
\item \textbf{Every undetected change has an active isolating seed.}  The half-overlap seed geometry and the induction boundary margin provide a length-$h$ interval containing that change, no other change, and at least $h/4$ observations on either side.  Its population score is at least $\kappa_k\sqrt h/4$, so the upper threshold makes it active.
\item \textbf{The shortest active seed contains one change, and padding restores balance.}  Since an active length-$h$ seed exists and $h<\Delta$, the shortest active seed has length $h$ and contains exactly one true change.  Its padded interval has length at most $5h/4<\Delta$, still contains only that change, and places it at least $h/8$ from both endpoints.  Hence its balance is at least $1/10$, so \cref{thm:single} localizes the recentered maximizer and the new recursive boundaries preserve the induction hypothesis.
\end{enumerate}

For the full method, the leading signal requirement is, up to constants and confidence terms,
\begin{equation}
 \kappa\sqrt h\gtrsim
 \nu_L\sqrt r\left(\sqrt d+\frac{d^{3/2}}{\sqrt h}\right).
 \label{eq:full-snr}
\end{equation}
For the diagonal-odeco method it reduces to
\begin{equation}
 \kappa\sqrt h\gtrsim\sqrt{r\log(dn/\delta)}.
 \label{eq:diag-snr}
\end{equation}

\section{Cross-fitted refinement and minimax localization}
\label{sec:refinement}

The preliminary error in \cref{eq:single-rate} contains $\sqrt h$ because it maximizes a normalized CUSUM over a large interval.  Once an isolating interval and a changing direction are available, localization can be sharpened to the classical $\kappa^{-2}$ scale.

\subsection{Learning a scalar tensor direction}
On a local interval containing one change, split observations by parity.  On a pilot fold, maximize \cref{eq:sod-score} at the preliminary split, obtaining a frame $\widehat U=(\widehat u_1,\ldots,\widehat u_r)$ and contractions
$c_j=\widehat C[\widehat u_j,\widehat u_j,\widehat u_j]$.  Define the unit symmetric tensor
\begin{equation}
 \widehat V=
 \frac{\sum_{j=1}^rc_j\widehat u_j^{\otimes3}}
 {\left(\sum_{j=1}^rc_j^2\right)^{1/2}}.
 \label{eq:direction}
\end{equation}
For the diagonal method, this reduces to normalizing the retained cubic coefficient vector.  On the held-out fold use the scalar feature
\begin{equation}
 Z_i=\inner{H_3(X_i)}{\widehat V}_{\!\F}.
 \label{eq:scalar-feature}
\end{equation}
The direction is independent of the held-out sequence.

\begin{lemma}[Pilot alignment]
\label{lem:alignment}
Let the population pilot CUSUM be $aD$ with $a>0$, and suppose its tensor error has injective norm at most $\lambda$.  If $a\kappa\ge4\sqrt r\lambda$, then the direction in \cref{eq:direction} satisfies
\begin{equation}
 \inner{D}{\widehat V}_{\!\F}\ge\kappa/2.
 \label{eq:alignment}
\end{equation}
\end{lemma}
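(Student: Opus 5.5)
The idea is to split each pilot contraction into signal and noise, and to use the fact that $\widehat U$ maximizes the score, together with \cref{prop:sod}, to show that the noise cannot tilt $\widehat V$ far from $D$. Write the pilot CUSUM as $\widehat C=aD+E$ with $\norm[\inj]{E}\le\lambda$.

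\textbf{Maximizer and normalization.} The supremum in \cref{eq:sod-score} is attained because the Stiefel manifold is compact and $U\mapsto\sum_j T[u_j,u_j,u_j]^2$ is continuous. So $\widehat U$ exists and
\[
\norm[2]{c}=\Bigl(\sum_j c_j^2\Bigr)^{1/2}=S_r(\widehat C).
\]
Since the $\widehat u_j$ are orthonormal, $\inner{\widehat u_j^{\otimes3}}{\widehat u_k^{\otimes3}}_{\F}=(\widehat u_j^\top\widehat u_k)^3=\delta_{jk}$. Hence $\widehat V$ in \cref{eq:direction} has unit Frobenius norm, provided $\norm[2]{c}>0$; this is checked below.

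\textbf{Decomposing the contractions.} Set $d_j=D[\widehat u_j,\widehat u_j,\widehat u_j]$ and $e_j=E[\widehat u_j,\widehat u_j,\widehat u_j]$. Then $c_j=ad_j+e_j$, and each $|e_j|\le\lambda$ by the definition \cref{eq:inj} of the injective norm, so $\norm[2]{e}\le\sqrt r\,\lambda$. Multilinearity gives $\inner{D}{\widehat u_j^{\otimes3}}_{\F}=d_j$, so
\[
\inner{D}{\widehat V}_{\F}=\frac{\inner{d}{c}}{\norm[2]{c}}
=\frac{\norm[2]{c}^2-\inner{e}{c}}{a\norm[2]{c}}
\ge\frac{\norm[2]{c}-\sqrt r\,\lambda}{a},
\]
where the last step is Cauchy--Schwarz.

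\textbf{Lower bound on $\norm[2]{c}$.} This is the only nontrivial input. Because $a>0$ and $D$ is odeco of rank at most $r$, the tensor $aD$ is also odeco, so \cref{prop:sod} gives $S_r(aD)=a\kappa$. The Lipschitz bound \cref{eq:sod-lipschitz} then yields
\[
\norm[2]{c}=S_r(aD+E)\ge a\kappa-\sqrt r\,\lambda\ge \tfrac34 a\kappa>0
\]
under the hypothesis $a\kappa\ge4\sqrt r\lambda$. In particular $c\neq0$, so $\widehat V$ is well defined.

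\textbf{Conclusion.} Substituting into the previous display gives
\[
\inner{D}{\widehat V}_{\F}\ge\kappa-\frac{2\sqrt r\,\lambda}{a}\ge\kappa-\frac{\kappa}{2}=\frac{\kappa}{2}.
\]
No step is a real obstacle. The only points needing care are that $\widehat V$ is a genuine unit tensor, which rests on orthonormality of the frame, and that the estimated score is bounded away from zero, which comes from \cref{prop:sod}.
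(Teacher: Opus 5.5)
Your proof is correct and is essentially the paper's argument written in coordinates. Your identity $\langle d,c\rangle/\|c\|_2=(\|c\|_2^2-\langle e,c\rangle)/(a\|c\|_2)$ is the paper's $a\langle D,\widehat V\rangle_{\F}=S_r(\widehat C)-\langle E,\widehat V\rangle_{\F}$, your Cauchy--Schwarz step is its bound $|\langle E,\widehat V\rangle_{\F}|\le\sqrt r\|E\|_{\inj}$, and both conclude via $S_r(aD+E)\ge a\kappa-\sqrt r\lambda$ from \cref{prop:sod}; your extra checks (attainment of the maximizer, $\|\widehat V\|_{\F}=1$, and $c\neq0$, which tacitly uses $\kappa>0$) are details the paper leaves implicit.
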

The proof uses the variational identity $\langle\widehat C,\widehat V\rangle=S_r(\widehat C)$ and the fact that $|\langle E,\widehat V\rangle|\le\sqrt r\|E\|_{\inj}$.

\subsection{Held-out least-squares localization}
Estimate the two scalar segment means from outer anchor blocks and minimize the held-out residual sum of squares over the central search region.  Swap the two parity roles and take the median of the preliminary estimate and the two fold-specific refinements.  The median safeguard retains the theoretical rate whenever both fold-specific estimates are accurate and prevents a single unstable fold from producing a remote finite-sample estimate.

\begin{theorem}[Refined localization]
\label{thm:refinement}
Assume the local interval contains exactly one change $\eta_k$ and both sides have length at least $ch$.  For each parity fold, suppose the pilot alignment condition in \cref{lem:alignment} holds and the two outer anchor mean errors are at most $\kappa_k/16$.  Conditional on all pilot and anchor observations, let $x=\log(8K/\delta)$.  Then the two-way cross-fitted estimator satisfies, simultaneously for all $k$, with conditional probability at least $1-\delta$ over the central held-out observations,
\begin{equation}
 |\widetilde\eta_k-\eta_k|
 \le C\left\{
 \frac{\nu_L^2x}{\kappa_k^2}
 +\frac{\nu_Lx^{3/2}}{\kappa_k}+1
 \right\}.
 \label{eq:refined-rate}
\end{equation}
If the pilot and anchor conditions hold jointly with probability at least $1-\delta_0$, the same bound holds unconditionally with probability at least $1-\delta-\delta_0$.  In the small-jump regime $\kappa_k\sqrt x/\nu_L\to0$, the first term dominates and
$|\widetilde\eta_k-\eta_k|=O_{\Pp}(\kappa_k^{-2})$.
\end{theorem}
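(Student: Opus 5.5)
Conditional on the pilot and anchor observations, I will treat $\widehat V$ and the two anchor mean estimates $\widehat m_0,\widehat m_1$ as fixed. The argument then reduces to a scalar least-squares change-point analysis for the held-out features $Z_i=\langle H_3(X_i),\widehat V\rangle_{\F}$. These are independent. Their mean is $m_0=\langle A_{k-1},\widehat V\rangle$ before $\eta_k$ and $m_1=\langle A_k,\widehat V\rangle$ after. By \cref{lem:alignment}, the jump satisfies $\delta:=m_1-m_0\ge\kappa_k/2$. Because $\|\widehat V\|_{\F}=1$, \cref{lem:directional-chaos} shows that the centered $\xi_i=Z_i-\E Z_i$ have $\psi_{2/3}$ norm at most $\nu_L$.

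For a candidate $t>\eta_k$, the held-out criterion assigns mean $\widehat m_0$ to the points in $(\eta_k,t]$ although their true mean is $m_1$. Expanding $\sum(Z_i-\widehat m_0)^2-(Z_i-\widehat m_1)^2$ over those $|t-\eta_k|$ held-out indices gives
\[
\mathrm{RSS}(t)-\mathrm{RSS}(\eta_k)=\sum_{i\in(\eta_k,t]}\bigl\{(\widehat m_1-\widehat m_0)(2\xi_i+2m_1-\widehat m_0-\widehat m_1)\bigr\}.
\]
The anchor errors are at most $\kappa_k/16\le\delta/8$. Hence $\widehat m_1-\widehat m_0\ge 3\delta/4$ and $2m_1-\widehat m_0-\widehat m_1\ge 3\delta/4$. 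The drift is therefore at least $c\delta^2\ell$, where $\ell=|t-\eta_k|$ counts held-out points (about half the distance, up to parity rounding, which explains the $+1$ term). The noise is $c'\delta\sum\xi_i$ over $\ell$ consecutive held-out indices. The case $t<\eta_k$ is symmetric.

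The key step is a uniform-in-$\ell$ bound on the partial sums $S_\ell=\sum_{j\le\ell}\xi_{i_j}$ on each side of $\eta_k$. I would apply the scalar weighted sub-Weibull Bernstein inequality (the one-dimensional case underlying \cref{thm:tensor-bernstein}, from \citet{bong2023tight}) with weights equal to $1$. This gives $|S_\ell|\le C\nu_L(\sqrt{\ell y}+y^{3/2})$ with probability $1-2e^{-y}$. Uniformity over $\ell$ comes from peeling over dyadic blocks $\ell\in[2^j,2^{j+1})$, with a maximal inequality inside each block (Lévy–Ottaviani or Montgomery-Smith for independent summands) and $y=x+2\log(j+1)$, so the blocks sum to probability $\delta/(8K)$ per side and fold. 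A union over the $K$ changes, two folds and two sides gives total probability $\delta$. The peeling is the main obstacle: the heavy $\psi_{2/3}$ tail must not turn the union into extra $\log n$ factors. If needed, I would absorb $\log(j+1)$ into the drift by noting that the drift grows linearly in $2^j$.

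A minimizer must satisfy $c\delta^2\ell\le C\delta\nu_L(\sqrt{\ell y}+y^{3/2})$. Solving the two cases gives either $\ell\lesssim\nu_L^2y/\delta^2$ or $\ell\lesssim\nu_Ly^{3/2}/\delta$, and $\delta\ge\kappa_k/2$ yields \cref{eq:refined-rate} for each fold estimate. When both fold estimates satisfy the bound, the median of three numbers, two of which lie within $R$ of $\eta_k$, is itself within $R$ of $\eta_k$, so the final estimator inherits the bound. The unconditional statement follows by integrating over the pilot/anchor event, at the cost of $\delta_0$. In the small-jump regime the first term dominates, since $\nu_Lx^{3/2}/\kappa_k\le\nu_L^2x/\kappa_k^2$ exactly when $\kappa_k\sqrt x\le\nu_L$, and this gives the $O_{\Pp}(\kappa_k^{-2})$ claim.
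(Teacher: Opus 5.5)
Your overall route matches the paper's. You condition on the pilot and anchor data, reduce to a scalar problem with jump $\delta\ge\kappa_k/2$ from \cref{lem:alignment} and $\psi_{2/3}$ norm $\nu_L$ from \cref{lem:directional-chaos}, and use the exact RSS increment with both factors in $[3\delta/4,5\delta/4]$. You then combine a maximal partial-sum inequality with dyadic peeling, and finish with a union bound and the median. Using Montgomery-Smith or Ottaviani instead of the paper's moment bound plus Doob is immaterial.

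The step that fails as you set it up is the one you flagged, the peeling. You use blocks $2^j\le\ell<2^{j+1}$ counted from $\ell=1$ with levels $y_j=x+2\log(j+1)$. The decisive block is then the one with $2^j$ near the target radius $R\asymp\nu_L^2x/\delta^2+\nu_Lx^{3/2}/\delta$, and there $\log(j+1)\approx\log\log R$. Your inequality $c\delta^2\ell\le C\delta\nu_L(\sqrt{\ell y_j}+y_j^{3/2})$ therefore only gives $\ell\lesssim(\nu_L^2/\delta^2)\{x+\log\log(\nu_L/\delta)\}+\cdots$. This iterated-logarithm loss is not in \cref{eq:refined-rate}, and it destroys the $O_{\Pp}(\kappa_k^{-2})$ conclusion as $\kappa_k\to0$ with $x$ fixed.

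Absorbing $\log(j+1)$ into the drift does not rescue this indexing. Let $2^{j_0}\approx R$. Relative to $R$, the drift at shell $j$ has gained only the factor $2^{j-j_0}$. It therefore beats the penalty $\log(j+1)\approx\log j_0$ only after about $\log_2(\log j_0/x)$ further shells. Consequently, candidates at distance of order $(\nu_L^2/\kappa_k^2)\log\log(\nu_L/\kappa_k)$ are not excluded.

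The fix, which is what the paper does in \cref{lem:scalar-localization}, has two parts: control nothing below the target radius, and index the union-bound penalty by the shell number counted from that radius.

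\begin{itemize}
\item Put $M=C_0\{\nu_L^2x/\delta^2+\nu_Lx^{3/2}/\delta+1\}$ and peel over the shells $2^jM\le\ell<2^{j+1}M$ for $j\ge0$.
\item On shell $j$, apply the maximal inequality with $m=\lceil2^{j+1}M\rceil$ and level $q_j=x+j+2$. The failure probabilities sum to $\sum_je^{-q_j}\lesssim e^{-x}$.
\item For $x\ge1$, the elementary bounds $(x+j+2)/2^j\le Cx$ and $(x+j+2)^{3/2}/2^j\le Cx^{3/2}$ hold. Hence the drift $c\,2^jM\delta^2$ exceeds $C\delta\nu_L\{\sqrt{2^{j+1}Mq_j}+q_j^{3/2}\}$ in every shell once $C_0$ is large.
\end{itemize}

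This gives \cref{eq:refined-rate} with no extra factor. The rest of your argument is fine: the union over $K$ changes, two folds and two sides (after adjusting constants), the parity rounding, the median of three, and the small-jump comparison.
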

The $x^{3/2}$ term is the large-deviation component of a degree-three sub-Weibull partial sum.  For bounded diagonal directions with fixed $r$, it can be absorbed into the first term over the usual small-jump range.

\subsection{A matching lower bound with zero degree-two projection signal}
Consider the rank-one family
\begin{equation}
 f_\theta(x)=1+\theta\phi_3(x_1),\qquad |\theta|\le(2\sqrt7)^{-1}.
 \label{eq:lower-density}
\end{equation}
It is a valid density relative to $\mu$.  Every total-degree-at-most-two coefficient equals that of the uniform density, but
$A(f_\theta)-A(f_0)=\theta e_1^{\otimes3}$ and the jump size is $|\theta|$.

\begin{theorem}[Minimax lower bound]
\label{thm:lower}
There are numerical constants $c,c'>0$ such that, for $0<\kappa\le c$ and spacing large enough to contain $c/\kappa^2$ observations,
\begin{equation}
 \inf_{\widehat\eta}\sup_{P\in\cP(\kappa)}
 \E_P|\widehat\eta-\eta(P)|\ge\frac{c'}{\kappa^2},
 \label{eq:lower-bound}
\end{equation}
where $\cP(\kappa)$ contains one-change models using $f_0$ and $f_\kappa$ from \cref{eq:lower-density}.  Every segment-mean CUSUM formed from the degree-at-most-two projection tensor has zero population jump on this family.
\end{theorem}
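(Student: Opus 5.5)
We use Le Cam's two-point method with two one-change models whose change points sit $\Delta_0\asymp 1/\kappa^2$ apart, and we bound the total variation between them through the KL divergence of the differing block. Fix $n$ and a base location $\eta_0$ with enough spacing on both sides. Let $P_0$ place $f_0$ on $\{1,\ldots,\eta_0\}$ and $f_\kappa$ afterwards. Let $P_1$ do the same with change point $\eta_0+\Delta_0$, where $\Delta_0=\lfloor c/\kappa^2\rfloor\ge1$; this requires $\kappa\le c$ small enough, and the spacing assumption keeps both models in $\cP(\kappa)$. First check membership: $f_\kappa=1+\kappa\phi_3(x_1)$ satisfies $|\kappa\phi_3|\le\kappa\sqrt7\le1/2$, so it is a density bounded by $L=3/2$. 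By \cref{prop:isometry} and orthonormality, $A(f_\kappa)-A(f_0)=\kappa E_{(3,0,\ldots,0)}=\kappa e_1^{\otimes3}$. This jump is odeco of rank one with Frobenius norm $\kappa$, so \cref{ass:main} holds and the jump size is exactly $\kappa$.

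The two laws differ only on the $\Delta_0$ observations in $(\eta_0,\eta_0+\Delta_0]$, which are $f_\kappa$ under $P_0$ and $f_0$ under $P_1$. Hence
\[
\mathrm{KL}(P_1\|P_0)=\Delta_0\,\mathrm{KL}(f_0\|f_\kappa)\le\Delta_0\chi^2(f_0\|f_\kappa)=\Delta_0\int\frac{\kappa^2\phi_3^2}{1+\kappa\phi_3}\,d\mu\le2\Delta_0\kappa^2,
\]
using $1+\kappa\phi_3\ge1/2$ and $\|\phi_3\|_{L^2}=1$. With $c\le1/8$, Pinsker's inequality gives $\mathrm{TV}(P_0,P_1)\le\sqrt{\Delta_0\kappa^2}\le1/2$.

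Le Cam's reduction then finishes the bound. For any estimator $\widehat\eta$, the triangle inequality gives $|\widehat\eta-\eta_0|+|\widehat\eta-\eta_0-\Delta_0|\ge\Delta_0$. So
\[
\max_j\E_{P_j}|\widehat\eta-\eta(P_j)|\ge\frac{\Delta_0}{2}\cdot\frac{1-\mathrm{TV}}{2}\ge\frac{\Delta_0}{8}\ge\frac{c'}{\kappa^2}.
\]
The last step uses $\Delta_0\ge c/(2\kappa^2)$ for small $\kappa$.

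For the final sentence, expand $\mathrm{E}_{f_\kappa}\psi_\alpha=\int\psi_\alpha\,d\mu+\kappa\int\psi_\alpha\phi_3(x_1)\,d\mu$. For $|\alpha|\le2$, the second integral vanishes because $\psi_\alpha\perp\psi_{(3,0,\ldots,0)}$. So every degree-at-most-two coefficient agrees with that of $f_0$. Any segment-mean CUSUM of the degree-two projection tensor has expectation that is linear in these coefficients with weights summing to zero, as in \cref{eq:cusum-weights}, so its population jump is zero.

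The main obstacle is bookkeeping rather than ideas. The constants must be chosen so that $\Delta_0\ge1$, the tensor-level jump equals $\kappa$ exactly, and the spacing hypothesis covers the shifted change point.
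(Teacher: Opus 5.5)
Your proposal is correct and follows essentially the same route as the paper: a Le Cam two-point comparison with change points $\lfloor c/\kappa^2\rfloor$ apart, a per-observation KL bound of $2\kappa^2$ on the differing block, Pinsker's inequality, and orthogonality of $\phi_3(x_1)$ to every $\psi_\alpha$ with $|\alpha|\le2$ for the degree-two claim. The only cosmetic difference is that you bound $\mathrm{KL}(f_0\|f_\kappa)$ by the $\chi^2$-divergence, whereas the paper bounds $\mathrm{KL}(f_\kappa\|f_0)$ directly via $(1+z)\log(1+z)\le z+2z^2$; both give the same $2\kappa^2$.
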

The proof compares two change locations separated by $h\asymp\kappa^{-2}$.  Their Kullback--Leibler divergence is $O(h\kappa^2)$, so Le Cam's two-point lemma applies.  Together with \cref{thm:refinement}, this establishes minimax-order localization in the small-jump regime.

\section{Experiments}
\label{sec:experiments}

All experiments are generated by the code in \texttt{anc/code/degree3\_experiments.py}.  The random seeds, raw replication-level CSV files, threshold calibration, and plotting scripts are included in the arXiv ancillary directory.

\subsection{Pure cubic alternatives}
Let
\begin{equation}
 f_\theta(x)=1+\sum_{j=1}^d\theta_j\phi_3(x_j),
 \qquad \sqrt7\|\theta\|_1<1.
 \label{eq:simulation-density}
\end{equation}
Rejection sampling is exact because the proposal is $\mu$ and the envelope is $1+\sqrt7\|\theta\|_1$.  Orthogonality gives
\[
 \E_{f_\theta}\phi_3(X_j)=\theta_j,
 \qquad
 \E_{f_\theta}\psi_\alpha(X)=0\quad\text{for }1\le|\alpha|\le2.
\]
Thus every mean, every degree-two marginal coefficient, and every pairwise degree-two interaction is unchanged.  The degree-three jump is diagonal odeco with Frobenius norm $\|\theta^+-\theta^-\|_2$.

\subsection{Single change and dimension scaling}
For $d\in\{20,50,100,200\}$, set $n=60d$, $\eta=n/2$, $\theta^-=0$, and $\theta^+=0.34e_1$.  We use $30$ independent replications.  \LRD{} uses a cross-fitted rank-one diagonal score.  Baselines are the unregularized $\ell_2$ norm of all cubic diagonals, the exact full degree-two matrix score, all degree-one/two marginal features, a mean CUSUM, and an oracle that knows the changing cubic coordinate.

\begin{table}[t]
\centering
\small
\caption{Median absolute localization error over $30$ replications.  Runtime is the mean wall-clock time for computing all listed methods on one replication.}
\label{tab:single-results}
\begin{tabular}{@{}rrrrrrrr@{}}
\toprule
$d$ & $n$ & LR-D3 pre. & LR-D3 ref. & all cubic & degree two & mean & runtime (s)\\
\midrule
20  & 1200  & 34 & 32 & 48  & 462  & 324  & 0.009\\
50  & 3000  & 24 & 42 & 52  & 1222 & 753  & 0.021\\
100 & 6000  & 37 & 38 & 132 & 2496 & 2004 & 0.090\\
200 & 12000 & 34 & 30 & 52  & 5040 & 3417 & 0.339\\
\bottomrule
\end{tabular}
\end{table}

\begin{figure}[t]
\centering
\includegraphics[width=0.92\linewidth]{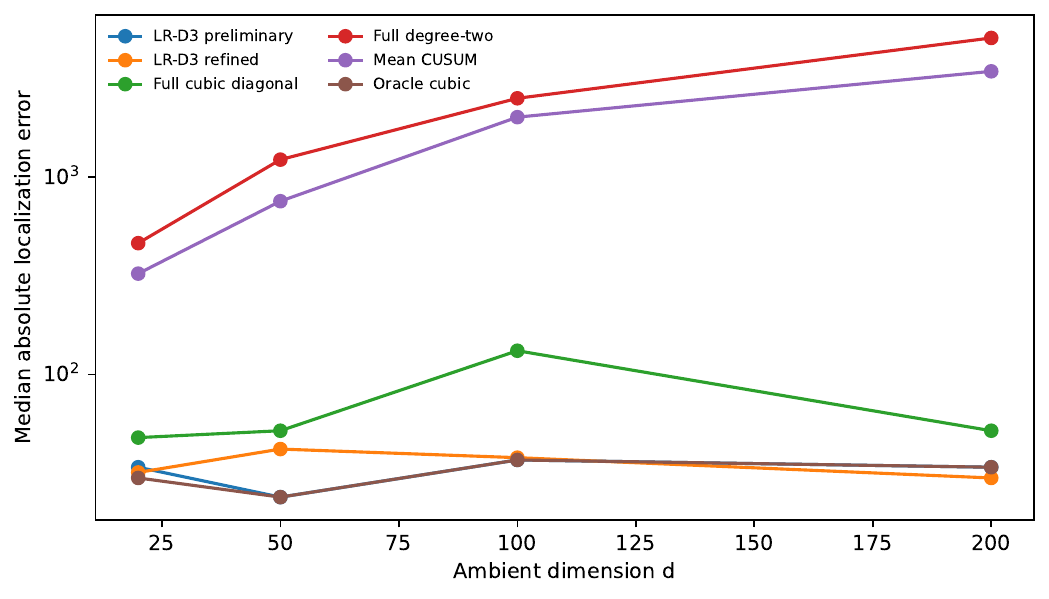}
\caption{Median single-change localization error.  The structured degree-three methods and oracle remain stable as $d$ grows, while procedures with zero population signal drift over an $O(n)$ range.  The unregularized all-cubic score pays for all $d$ coordinates.}
\label{fig:dimension}
\end{figure}

The oracle and preliminary LR-D3 results coincide at $d\ge50$ in median, indicating that the changing coordinate is reliably selected.  Refinement is a local finite-sample operation rather than a guaranteed monotone correction: it improves the median at $d=20$ and $d=200$, is comparable at $d=100$, and is less favorable at $d=50$.  In the multiple-change experiment below, the padded preliminary scan is already accurate and the two-way median safeguard leaves its aggregate error essentially unchanged.

\subsection{Population tent and full-tensor stress test}
For $d=100,n=800$, \cref{fig:score} compares the exact population score with the diagonal empirical score.  Both peak close to the true change.  The corresponding unregularized full-tensor optimization is shown in \cref{app:experiments}; at this sample size its noise is much larger, as predicted by the second term in \cref{eq:full-cusum-bound}.

\begin{figure}[t]
\centering
\includegraphics[width=0.92\linewidth]{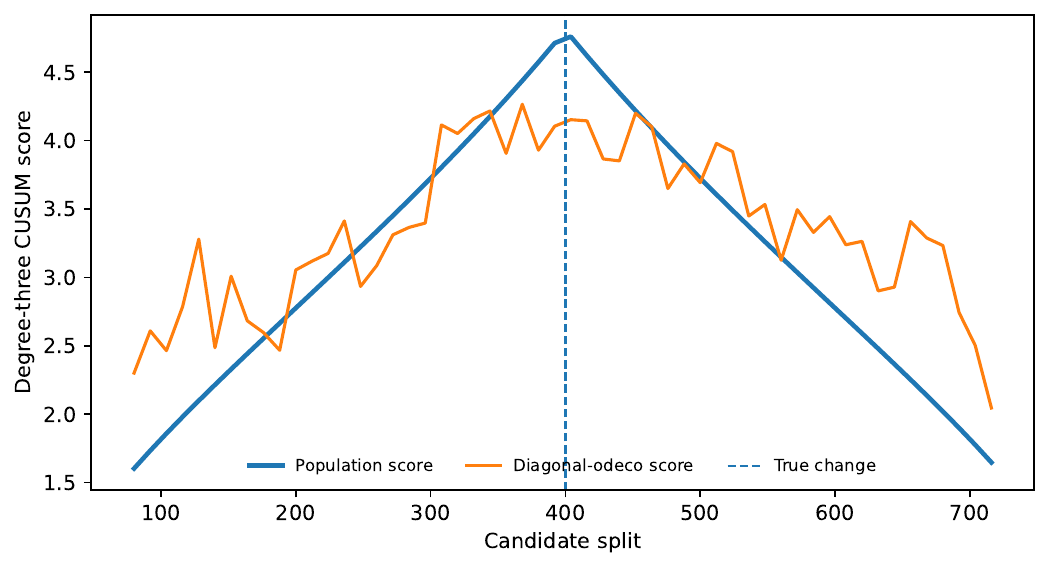}
\caption{Population and diagonal-odeco degree-three CUSUM scores for a $d=100$ pure cubic change at $400$.}
\label{fig:score}
\end{figure}

\subsection{Three changes in \texorpdfstring{$d=100$}{d=100}}
Set $n=9600$, $d=100$, and true changes $(2400,4800,7200)$.  The four segment parameters are
\[
 0,\qquad0.36e_1,\qquad0.36e_2,\qquad0.36e_3.
\]
The jump ranks are $1,2,2$.  We use base scale $h=1600$.  The threshold $6.2025$ is the empirical $97.5\%$ quantile of the maximum seeded null score over $40$ independently simulated null sequences, with every integer central split evaluated.  Across $30$ change-point replications, the estimated number of changes equals three in every run.  The median Hausdorff errors are $27$ and $28$ for the preliminary and refined estimates, respectively, while the corresponding mean errors are $43.40$ and $43.47$.  Mean runtime is $0.297$ seconds on the execution environment used for the ancillary results.  Thus the refinement is statistically comparable here rather than uniformly improving an already sharp preliminary estimate.

\begin{figure}[t]
\centering
\includegraphics[width=0.92\linewidth]{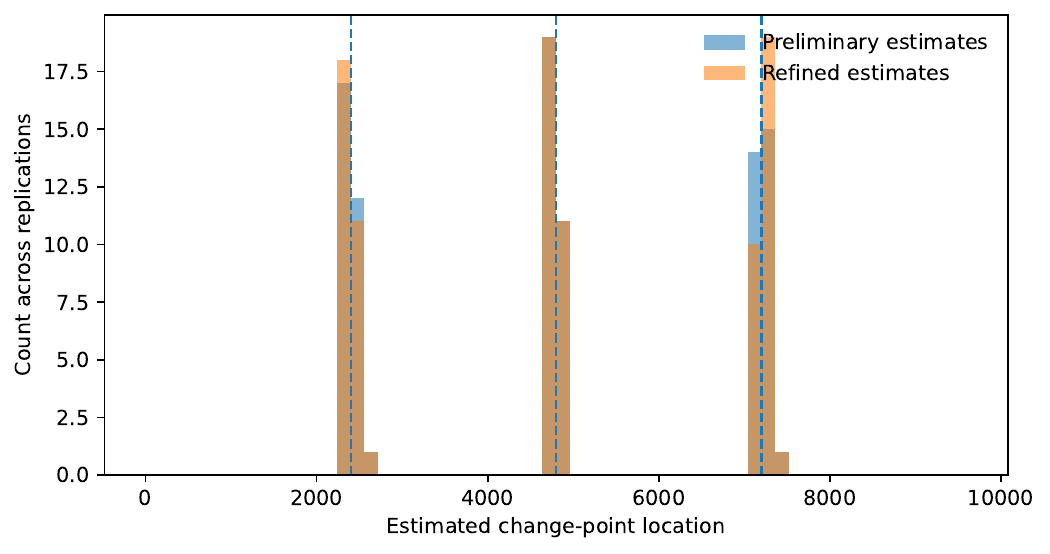}
\caption{Preliminary and refined estimates over $30$ replications of the $d=100$ three-change experiment.  Dashed lines mark the true changes.}
\label{fig:multiple}
\end{figure}

\section{Discussion and limitations}
\label{sec:discussion}

Degree-three projection detects distributional information unavailable to degree two while retaining an exact finite-dimensional geometry.  The price is genuine tensor complexity.  The full injective-norm theorem shows that a sample size merely proportional to $d$ may be insufficient for unrestricted tensor directions because the $d^{3/2}/N$ term can dominate.  Stronger structure---coordinate odeco, a known dictionary, or a small directional sketch---can reduce this complexity.  Our experiments use the exact coordinate-odeco specialization and should not be read as evidence that arbitrary order-three tensors can be estimated at the same sample size.

The method is projection-limited: if $P_3(f_1-f_0)=0$, the population signal is zero.  It is also specialized to independent observations in the main theorems.  \Cref{app:mixing} gives a geometric $\beta$-mixing extension through blocking, at the cost of logarithmic block factors.  Finally, generic tensor injective-norm optimization is nonconvex and computationally hard.  Odeco tensor power methods and our implicit Stiefel optimization are practical under favorable structure but do not guarantee a global maximizer for an arbitrary empirical tensor.  The diagonal implementation avoids this optimization entirely.

\paragraph{Reproducibility.}
The source archive contains the exact code, requirements, raw CSV results, threshold calibration, figure data, and commands used to regenerate every numerical result.  No external dataset is required.  The tensor is evaluated implicitly; the code includes a Monte Carlo check of \cref{eq:direction-isometry}.

\paragraph{Broader impact and ethics.}
The work is methodological and uses synthetic data.  Change-point systems can affect monitoring and intervention decisions; practitioners should calibrate false-alarm rates for their application and should not interpret a detected projection change as causal evidence.  The polynomial projection may also miss changes outside the retained degree.

\appendix

\section{Proofs for the degree-three representation}
\label{app:representation}

\begin{proof}[Proof of \cref{prop:isometry}]
By construction, $\{E_\alpha\}_{|\alpha|\le3}$ is an orthonormal basis of $\Sym^3(\R^p)$ and $\{\psi_\alpha\}_{|\alpha|\le3}$ is an orthonormal basis of the total-degree-three polynomial space.  Therefore
\[
 A(f)=\sum_{|\alpha|\le3}\E_f\psi_\alpha(X)E_\alpha
 =\sum_{|\alpha|\le3}\theta_\alpha(f)E_\alpha.
\]
Parseval's identity on the tensor basis and then on the polynomial basis gives
\[
 \|A(f)-A(g)\|_\F^2
 =\sum_{|\alpha|\le3}|\theta_\alpha(f)-\theta_\alpha(g)|^2
 =\|P_3(f-g)\|_{L^2(\mu)}^2.
\]
For $B=\sum b_\alpha E_\alpha$,
$\langle H_3(X),B\rangle=\sum b_\alpha\psi_\alpha(X)$, and orthonormality under $\mu$ yields
$\E_\mu\langle H_3(X),B\rangle^2=\sum b_\alpha^2=\|B\|_\F^2$.
\end{proof}

\subsection{Verification of the entry table}
For a multi-index $\alpha$, the entry assigned to each distinct permutation is
$\theta_\alpha/\sqrt{q_\alpha}$.  For example, for a quadratic univariate term,
$\theta_{2e_j}=\E_f\phi_2(X_j)=\sqrt5\E_fp_2(X_j)=b_j/\sqrt5$ and $q_{2e_j}=3$, giving $b_j/\sqrt{15}$.  For a mixed cubic term,
$\theta_{2e_i+e_j}=\sqrt{15}\E_fp_2(X_i)X_j=\eta_{ij}/\sqrt{15}$ and $q=3$, giving $\eta_{ij}/\sqrt{45}=\eta_{ij}/(3\sqrt5)$.  The other rows are identical calculations.

\subsection{Derivation of the implicit contraction}
Write $u=(a,b)$ and expand
\[
 H_3(x)[u,u,u]=\sum_{|\alpha|\le3}\psi_\alpha(x)E_\alpha[u,u,u].
\]
For each $\alpha$, $E_\alpha[u,u,u]=\sqrt{q_\alpha}a^{3-|\alpha|}\prod_jb_j^{\alpha_j}$.  Grouping by the seven multi-index types gives
\begin{align*}
&a^3+\sqrt3a^2\sum_jb_j\phi_1(x_j)
+\sqrt3a\sum_jb_j^2\phi_2(x_j)
+\sqrt6a\sum_{i<j}b_ib_j\phi_1(x_i)\phi_1(x_j)\\
&\quad+\sum_jb_j^3\phi_3(x_j)
+\sqrt3\sum_{i\ne j}b_i^2b_j\phi_2(x_i)\phi_1(x_j)
+\sqrt6\sum_{i<j<k}b_ib_jb_k\phi_1(x_i)\phi_1(x_j)\phi_1(x_k).
\end{align*}
The pair and triple sums reduce to the elementary symmetric-polynomial identities in \cref{eq:implicit-contraction}.

\section{Tensor score properties}
\label{app:score}

\begin{proof}[Proof of \cref{prop:sod}]
Let $D=\sum_{k=1}^{r_0}\lambda_kv_k^{\otimes3}$ with $r_0\le r$.  Choosing a frame that contains $v_1,\ldots,v_{r_0}$ gives $S_r(D)\ge(\sum_k\lambda_k^2)^{1/2}=\|D\|_\F$.  Conversely, for any orthonormal frame $u_1,\ldots,u_r$, put $a_{kj}=\langle v_k,u_j\rangle$.  Cauchy--Schwarz gives
\[
 \left(\sum_k\lambda_ka_{kj}^3\right)^2
 \le\left(\sum_k\lambda_k^2a_{kj}^2\right)
 \left(\sum_ka_{kj}^4\right)
 \le\sum_k\lambda_k^2a_{kj}^2.
\]
Summing over $j$ and using Bessel's inequality, $\sum_ja_{kj}^2\le1$, yields
$\sum_jD[u_j,u_j,u_j]^2\le\sum_k\lambda_k^2$.

For the perturbation inequality, fix any frame $U$.  The Euclidean norm of the vector with coordinates $E[u_j,u_j,u_j]$ is at most $\sqrt r\|E\|_\inj$.  The reverse triangle inequality gives
\[
 \left|\|\Phi_U(T+E)\|_2-\|\Phi_U(T)\|_2\right|
 \le\sqrt r\|E\|_\inj.
\]
Taking suprema and exchanging $T$ and $T+E$ proves the claim.
\end{proof}

\begin{remark}[Approximate odeco jumps]
\label{rem:approx}
If $D=D_{\mathrm{od}}+R$ with $D_{\mathrm{od}}$ odeco of rank at most $r$, then
$|S_r(D)-\|D_{\mathrm{od}}\|_\F|\le\sqrt r\|R\|_\inj\le\sqrt r\|R\|_\F$.
All theorems remain valid after adding the corresponding approximation term to $\eps_m$ and replacing $\kappa$ by $\|D_{\mathrm{od}}\|_\F$.
\end{remark}

\section{Proof of the tensor deviation inequality}
\label{app:concentration}

\subsection{Polynomial-chaos moment input}
We derive the required moment growth from the product-ensemble hypercontractivity of \citet{mossel2010noise}.

\begin{lemma}[Degree-three product-polynomial hypercontractivity]
\label{lem:hypercontractive}
There is a numerical $C_3$ such that every polynomial $Q$ of total degree at most three in independent uniform $[-1,1]$ coordinates satisfies, for $q\ge2$,
\begin{equation}
 \|Q\|_{L^q(\mu)}\le C_3q^{3/2}\|Q\|_{L^2(\mu)}.
 \label{eq:hypercontractive}
\end{equation}
\end{lemma}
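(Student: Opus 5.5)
The plan is to reduce \cref{eq:hypercontractive} to the standard hypercontractive inequality for low-degree polynomials over a product probability space, as in \citet{mossel2010noise}. Expand $Q=\sum_{|\alpha|\le3}b_\alpha\psi_\alpha$ in the orthonormal Legendre basis of \cref{prop:isometry}, so that $\|Q\|_{L^2(\mu)}^2=\sum b_\alpha^2$. Group the terms by total degree, $Q=\sum_{k=0}^3Q_k$ with $Q_k=\sum_{|\alpha|=k}b_\alpha\psi_\alpha$. Then write $Q$ in the Efron--Stein/orthogonal decomposition of the product space $L^2([-1,1]^d,\mu)$. Each univariate factor $\phi_m(x_j)$ with $m\ge1$ has mean zero. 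So $\psi_\alpha$ lies in the Efron--Stein component indexed by the support $\operatorname{supp}(\alpha)$, and that support has size at most $|\alpha|\le3$. Hence $Q$ is a multilinear polynomial of degree at most three in the ``ensemble'' $\{\phi_1(x_j),\phi_2(x_j),\phi_3(x_j)\}_{j\le d}$ of orthonormal, mean-zero random variables. Here degree means the number of distinct coordinates involved.

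The key step is the ensemble hypercontractivity of Mossel--O'Donnell--Oleszkiewicz. Fix a single coordinate space $L^2(\operatorname{Unif}[-1,1])$ restricted to $\operatorname{span}\{1,\phi_1,\phi_2,\phi_3\}$. This is a finite-dimensional space of bounded functions with $|\phi_m|\le\sqrt7$. The noise operator $T_\rho$, which multiplies the mean-zero part by $\rho$, is $(2,q)$-hypercontractive on this space for some $\rho=\rho(q)$. Because the space is finite-dimensional with a bounded basis (a ``$\beta$-bounded'' ensemble in their language), one can take $\rho(q)\ge c/\sqrt{q-1}$. Tensorization of hypercontractivity over independent coordinates then gives $\|T_\rho Q\|_q\le\|Q\|_2$ on the full product space. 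Since $Q$ has Efron--Stein degree at most three, $\|Q\|_q\le\rho^{-3}\|Q\|_2\le C(q-1)^{3/2}\|Q\|_2$, and this yields $C_3q^{3/2}$.

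The main obstacle is the single-coordinate constant. The bound $\rho\gtrsim 1/\sqrt{q}$ in the $q\to\infty$ regime is what produces the exponent $3/2$ rather than a worse power. For a general finite ensemble with minimum atom probability $\lambda$, MOO only give $\rho\approx\lambda^{1/2-1/q}/\sqrt{q}$. Our base measure is continuous, so I would instead argue directly. Any degree-$\le3$ univariate polynomial $g$ with mean zero satisfies $\|g\|_\infty\le C\|g\|_2$ by equivalence of norms on a $4$-dimensional space. The standard ``bounded mean-zero variable is $(2,q,\rho)$-hypercontractive with $\rho\asymp\|g\|_2/(\sqrt q\,\|g\|_\infty)$'' lemma (Wolff / O'Donnell's book, Ch.~10) then applies uniformly. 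An alternative route is to bound $\psi_\alpha$ moments via Bonami--Beckner after noting that uniform $[-1,1]$ is hypercontractive with Gaussian-type constants, being a Lipschitz image of a Gaussian. A degree-three polynomial in independent $x_j$ is then a degree-three function of Gaussians composed with a Lipschitz map, but that loses polynomiality, so I prefer the bounded-ensemble argument. Finally, the constant is dimension-free because tensorization introduces no dependence on $d$, which is exactly what \cref{lem:directional-chaos} needs.
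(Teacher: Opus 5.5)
Your proposal is correct and follows essentially the same route as the paper. You treat $(1,\phi_1,\phi_2,\phi_3)$ as a one-coordinate orthonormal ensemble, get $(2,q,\eta_q)$-hypercontractivity with $\eta_q\gtrsim(q-1)^{-1/2}$ from the bounded mean-zero criterion (Proposition~3.16 of \citet{mossel2010noise}), tensorize, and conclude $\|Q\|_q\le\eta_q^{-3}\|Q\|_2$ because each $\psi_\alpha$ involves at most three coordinates. The only difference is cosmetic: you use norm equivalence on the finite-dimensional space of mean-zero univariate cubics, while the paper makes the constant explicit via $\sum_{k=1}^3\phi_k(x)^2\le15$, which gives $\|G\|_\infty\le\sqrt{15}\|G\|_2$ and $\eta_q\ge\{2\sqrt{15}\sqrt{q-1}\}^{-1}$.
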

\begin{proof}
For one coordinate $U\sim\operatorname{Unif}[-1,1]$, regard
$\mathcal X=(1,\phi_1(U),\phi_2(U),\phi_3(U))$ as an orthonormal ensemble.  Any centered linear combination
$G=\sum_{k=1}^3a_k\phi_k(U)$ satisfies $\|G\|_2=\|a\|_2$ and
\begin{equation}
 \sum_{k=1}^3\phi_k(x)^2\le15,
 \qquad
 15-\sum_{k=1}^3\phi_k(x)^2
 =\frac54(1-x^2)(35x^4+2x^2+11)\ge0.
 \label{eq:legendre-envelope}
\end{equation}
Hence $\|G\|_q\le\sqrt{15}\|G\|_2$.  Proposition~3.16 of \citet{mossel2010noise} implies that every such $G$ is $(2,q,\eta_q)$-hypercontractive with
$\eta_q\ge\{2\sqrt{15}\sqrt{q-1}\}^{-1}$.  Their Proposition~3.11 tensorizes this constant over independent coordinate ensembles.  Finally, every total-degree-at-most-three Legendre expansion is an ensemble-multilinear polynomial of ensemble degree at most three, so their Proposition~3.12 gives
\[
 \|Q\|_q\le\eta_q^{-3}\|Q\|_2
 \le(2\sqrt{15})^3(q-1)^{3/2}\|Q\|_2.
\]
Absorb the fixed constant and use $q-1\le q$.
\end{proof}

\begin{proof}[Proof of \cref{lem:directional-chaos}]
Let $Q_B(x)=\langle H_3(x),B\rangle$.  By \cref{eq:direction-isometry}, $\|Q_B\|_{L^2(\mu)}=\|B\|_\F$.  Since $f\le L$,
\[
 \|Q_B\|_{L^q(f)}
 \le L^{1/q}\|Q_B\|_{L^q(\mu)}
 \le C_3\max\{1,\sqrt L\}q^{3/2}\|B\|_\F.
\]
Centering changes the $L^q$ norm by at most a factor two.  The moment characterization of $\psi_{2/3}$ variables completes the proof.
\end{proof}

\subsection{Weighted scalar concentration}
The generalized Bernstein--Orlicz inequalities of \citet{kuchibhotla2022beyond}, with the tight refinement of \citet{bong2023tight}, imply the following form: if independent centered random variables satisfy $\|Y_i\|_{\psi_{2/3}}\le K$, then for $x\ge1$,
\begin{equation}
 \Pp\left(\left|\sum_iw_iY_i\right|>
 CK\{\sqrt x\|w\|_2+x^{3/2}\|w\|_\infty\}\right)\le2e^{-x}.
 \label{eq:scalar-subweibull}
\end{equation}
The two terms are both necessary for general $\psi_{2/3}$ summands.

\begin{proof}[Proof of \cref{thm:tensor-bernstein}]
Let $\cN$ be a $1/8$-net of $\mathbb S^{p-1}$ with $|\cN|\le17^p$.  For a fixed triple $(u,v,z)\in\cN^3$, let
$B=\Sym(u\otimes v\otimes z)$.  Since $H_3(X_i)$ is symmetric,
$\langle Z_i,u\otimes v\otimes z\rangle=\langle Z_i,B\rangle$, and $\|B\|_\F\le1$.  By \cref{lem:directional-chaos} and \cref{eq:scalar-subweibull}, with $x=\rho(p,\delta)$,
\[
 \left|\sum_iw_iZ_i[u,v,z]\right|
 \le C\nu_L\{\sqrt\rho\|w\|_2+\rho^{3/2}\|w\|_\infty\}
\]
simultaneously over the net triples with probability at least $1-\delta$.

Let $T=\sum_iw_iZ_i$ and $M=\|T\|_\inj$.  For arbitrary unit $u,v,z$, choose net points $u_0,v_0,z_0$.  The telescoping identity
\[
 T[u,v,z]-T[u_0,v_0,z_0]
 =T[u-u_0,v,z]+T[u_0,v-v_0,z]+T[u_0,v_0,z-z_0]
\]
shows that the absolute difference is at most $3M/8$.  Therefore
$M\le(1-3/8)^{-1}\max_{\cN^3}|T[u_0,v_0,z_0]|$, which proves \cref{eq:tensor-bernstein} after changing the constant.
\end{proof}

\begin{proof}[Proof of \cref{cor:uniform-cusum}]
For each CUSUM, \cref{eq:cusum-weights} gives $\|w\|_2=1$.  Balance gives
$\|w\|_\infty\le(\alpha(e-s))^{-1/2}$.  Apply \cref{thm:tensor-bernstein} with failure probability $\delta/M$ and then \cref{prop:sod}.
\end{proof}

\begin{proof}[Proof of \cref{cor:diagonal}]
Each centered coordinate $\phi_3(X_{ij})-\E\phi_3(X_{ij})$ is bounded in absolute value by $2\sqrt7$.  For deterministic CUSUM weights with squared sum one, Hoeffding's inequality gives
\[
 \Pp\left(\left|\sum_iw_i\{\phi_3(X_{ij})-\E\phi_3(X_{ij})\}\right|>x\right)
 \le2\exp(-cx^2).
\]
A union bound over $dM$ coordinate-candidate pairs controls the vector $\ell_\infty$ error.  The top-$r$ Euclidean norm is $\sqrt r$-Lipschitz with respect to $\ell_\infty$.
\end{proof}

\section{Population geometry and the one-change theorem}
\label{app:single}

\subsection{Population CUSUM calculation}
For $t<\eta$, the left mean is $A_0$, whereas the right mean is
$\{(\eta-t)A_0+(e-\eta)A_1\}/(e-t)$.  Their difference is $(e-\eta)D/(e-t)$, which gives the first branch of \cref{eq:tent-coefficient}.  The $t>\eta$ branch is symmetric.  Squaring the coefficient and subtracting its value at $t$ from its value at $\eta$ gives \cref{eq:exact-margin}; for $t<\eta$,
\begin{align*}
&a_\eta(\eta)^2-a_\eta(t)^2\\
&=\frac{(\eta-s)(e-\eta)}{e-s}
-\frac{(e-\eta)^2(t-s)}{(e-s)(e-t)}
=\frac{(e-\eta)(\eta-t)}{e-t}.
\end{align*}

\begin{proof}[Proof of \cref{thm:single}]
Let $g(t)=S_r(C_{s,e}^t)$ and $\widehat g(t)$ be the empirical score.  Uniform error and empirical maximality imply
\[
 g(\eta)-g(\widehat\eta)\le2\eps_m.
\]
If $t<\eta$, balance and \cref{eq:exact-margin} give
$g(\eta)^2-g(t)^2\ge\alpha\kappa^2(\eta-t)$.  Also $g(t)\le g(\eta)$ and
$2g(\eta)\le\kappa\sqrt m$, because $(\eta-s)(e-\eta)/(e-s)\le m/4$.  Hence
\[
 g(\eta)-g(t)=\frac{g(\eta)^2-g(t)^2}{g(\eta)+g(t)}
 \ge\frac{\alpha\kappa|t-\eta|}{\sqrt m}.
\]
The same inequality holds for $t>\eta$, proving \cref{eq:single-rate}.  On a null interval $g(t)=0$ for every candidate and the empirical maximum is at most $\eps_m$.  On the one-change interval,
$g(\eta)\ge\alpha\kappa\sqrt m$, so the empirical maximum is at least this value minus $\eps_m$.  This proves \cref{eq:threshold-window}.
\end{proof}

\section{Full induction proof for multiple changes}
\label{app:multiple}

We give the deterministic proof on the uniform deviation event.  The constants below are numerical and can be chosen in sequence; their exact values are not optimized.

\subsection{Seed geometry}
\begin{lemma}[Balanced isolating seed]
\label{lem:seed-geometry}
Let $\eta$ lie in a recursive segment $(a,c]$ and suppose
$\min\{\eta-a,c-\eta\}\ge3h/4$.  Then the half-overlap base layer contains an interval $I=(s,e]\subset(a,c]$ of length $h$ such that
\begin{equation}
 \min\{\eta-s,e-\eta\}\ge h/4.
 \label{eq:seed-balance}
\end{equation}
\end{lemma}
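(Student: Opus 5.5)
The plan is to work directly with the explicit base-layer grid and find a start point that lands in a window of length $h/2$. At layer $0$ the seeds are $(kg,kg+h]$ with $g=\lfloor h/2\rfloor$ and $k=0,1,\ldots$, kept as long as $kg+h\le n$. Any seed $(s,s+h]$ with
\[
 \eta-\tfrac{3h}{4}\le s\le \eta-\tfrac{h}{4}
\]
automatically satisfies \cref{eq:seed-balance}. Indeed, $\eta-s\ge h/4$, and $e-\eta=s+h-\eta\ge h/4$. So it suffices to find a grid start in the closed window $W=[\eta-3h/4,\eta-h/4]$ and then check that the resulting seed lies inside $(a,c]$.

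\textbf{Existence of a grid start.} The window $W$ has length $h/2$, which is at least the grid spacing $g$. Consequently $W$ contains some multiple $kg$ of $g$. For instance, take $k=\lceil(\eta-3h/4)/g\rceil$; then $kg<\eta-3h/4+g\le\eta-h/4$. Because $\eta-a\ge 3h/4$ and $a\ge0$, the left endpoint of $W$ is nonnegative, so $k\ge0$.

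\textbf{Containment in the segment and validity of the seed.} Set $s=kg$. Then $s\ge\eta-3h/4\ge a$ by the hypothesis $\eta-a\ge 3h/4$. Also $e=s+h\le\eta+3h/4\le c$ by the hypothesis $c-\eta\ge3h/4$. Hence $(s,e]\subset(a,c]$. Since $c\le n$, we get $s+h\le n$, so this start is actually included in the base layer. The seed is therefore not truncated by the $\min(2^jh,n)$ rule and has length exactly $h$.

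\textbf{Main obstacle.} The only delicate point is integer rounding: $g=\lfloor h/2\rfloor$ and $h/4$ need not be integers. The argument above needs only $g\le h/2$, which holds. If $h/4$ is not an integer, the balance conclusion $\min\{\eta-s,e-\eta\}\ge h/4$ still holds as a real inequality, because it is derived directly from $s\in W$. As the paper notes, rounding of $g=h/8$ elsewhere changes only constants. I would state the lemma for the global half-overlap grid and remark that a grid anchored at any offset works identically, since only the spacing bound $g\le h/2$ is used.
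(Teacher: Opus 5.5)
Your proof is correct and is essentially the paper's argument. The paper takes the two consecutive base seeds whose overlap contains $\eta$ and observes that their complementary offsets force one of them to be $h/4$-balanced; this is the same pigeonhole fact as your observation that the window $[\eta-3h/4,\eta-h/4]$ of admissible starts has length $h/2\ge\lfloor h/2\rfloor$. Your version is more explicit than the paper's one-line ``the assumed distance ensures containment'': the bounds $s\ge\eta-3h/4\ge a$ and $e\le\eta+3h/4\le c\le n$ directly verify both containment in $(a,c]$ and membership in the base layer, using only the spacing bound $\lfloor h/2\rfloor\le h/2$.
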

\begin{proof}
The length-$h$ intervals start $h/2$ apart.  The two intervals whose overlap contains $\eta$ place $\eta$ at complementary offsets summing to $h/2$; one has both endpoint distances at least $h/4$.  The assumed distance from $(a,c]$ ensures containment.
\end{proof}

\subsection{Boundary contamination}
\begin{lemma}[A scanned interval with no undetected change]
\label{lem:contamination}
Suppose a scanned interval contains no undetected true change.  Relative to its constant interior mean, assume at most $R_L$ observations inherited from the left recursive boundary have mean shift $D_L$ and at most $R_R$ observations inherited from the right boundary have mean shift $D_R$.  On any seed interval of length $m\ge h$ with central candidates,
\begin{equation}
 \sup_t S_r(C_{s,e}^t)
 \le \frac{2}{\sqrt h}
 \{R_L\|D_L\|_\F+R_R\|D_R\|_\F\}
 =\frac{2}{\sqrt h}(R_L\kappa_L+R_R\kappa_R).
 \label{eq:contamination-bound}
\end{equation}
The same bound holds for the diagonal score.
\end{lemma}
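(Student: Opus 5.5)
Write the population segment means explicitly. Let $A$ be the interior mean tensor. The first $R_L'\le R_L$ observations of the seed $(s,e]$ have mean $A+D_L$, the last $R_R'\le R_R$ have mean $A+D_R$, and the remaining observations have mean $A$. Only inherited observations lying inside $(s,e]$ contribute, and these sit at its ends. Since the CUSUM weights in \cref{eq:cusum-weights} sum to zero, the constant $A$ cancels. By linearity, the population CUSUM splits as
\[
 C_{s,e}^t=\Big(\sum_{i\in L}w_i\Big)D_L+\Big(\sum_{i\in R}w_i\Big)D_R ,
\]
where $L$ and $R$ are the contaminated index sets. The plan is to bound each scalar coefficient and then use the two properties of $S_r$.

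\textbf{Step 1 (bounding the weights).} Each weight satisfies $|w_i|\le\max\{\sqrt{(e-t)/((e-s)(t-s))},\sqrt{(t-s)/((e-s)(e-t))}\}$. For a central candidate, $\min\{t-s,e-t\}\ge m/4$ and $\max\{t-s,e-t\}\le m$. Hence $|w_i|\le\sqrt{m/(m\cdot m/4)}=2/\sqrt m\le2/\sqrt h$, using $m\ge h$. This is the same computation as the $\|w\|_\infty$ bound in \cref{eq:cusum-weights} with balance $1/4$. It follows that $|\sum_{i\in L}w_i|\le 2R_L/\sqrt h$, and likewise for $R$.

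\textbf{Step 2 (score bound).} $S_r$ is subadditive and positively homogeneous: $S_r(T+E)\le S_r(T)+S_r(E)$ by the reverse-triangle argument in the proof of \cref{prop:sod} applied frame by frame, and $S_r(cT)=|c|S_r(T)$. It is also dominated by Frobenius norm: $S_r(T)\le\|T\|_\F$. This last fact holds because for orthonormal $u_j$ the tensors $u_j^{\otimes3}$ are orthonormal in Frobenius inner product, so Bessel gives $\sum_jT[u_j,u_j,u_j]^2\le\|T\|_\F^2$. Combining,
\[
S_r(C_{s,e}^t)\le\tfrac{2}{\sqrt h}(R_L\|D_L\|_\F+R_R\|D_R\|_\F).
\]
The contamination shifts are true jumps, so their norms equal $\kappa_L$ and $\kappa_R$. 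If an inherited boundary region straddles two jumps, I would note that $R_k\le h/8<\Delta$ rules this out.

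\textbf{Step 3 (diagonal score and the main obstacle).} For the diagonal score, the same decomposition holds for the vector CUSUM $c_{s,e}^t$. Here $S_r^{\mathrm{diag}}$ is a norm bounded by the $\ell_2$ norm, and the diagonal of $D$ has $\ell_2$ norm at most $\|D\|_\F$. The main subtlety, rather than a hard step, is the bookkeeping: contaminated observations may sit on the same side of $t$ or straddle it, and $t$ may be near the seed's end. Bounding $|\sum w_i|$ by a count times $\max|w_i|$ sidesteps any case analysis, provided centrality is used to keep $\max|w_i|\le 2/\sqrt m$.
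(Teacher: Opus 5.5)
Your proposal is correct and follows essentially the same route as the paper: the zero-sum CUSUM weights remove the interior mean, centrality bounds every weight by $2/\sqrt m\le 2/\sqrt h$, and Bessel's inequality for the Frobenius-orthonormal tensors $u_j^{\otimes 3}$ gives $S_r(T)\le\|T\|_\F$, with the top-$r$ diagonal score handled by the Euclidean norm. The only cosmetic difference is that you pass through subadditivity and homogeneity of $S_r$ and then apply $S_r\le\|\cdot\|_\F$ to each jump, whereas the paper bounds $\|C_{s,e}^t\|_\F$ directly by the triangle inequality and applies $S_r\le\|\cdot\|_\F$ once.
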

\begin{proof}
CUSUMs are invariant to adding a constant tensor mean.  Each nonzero edge mean is multiplied by a weight of absolute value at most $2/\sqrt m$ because candidates lie in the central half.  Therefore
\[
 \|C_{s,e}^t\|_\F
 \le\frac{2}{\sqrt h}
 \{R_L\|D_L\|_\F+R_R\|D_R\|_\F\}.
\]
For every tensor $T$, the tensors $u_1^{\otimes3},\ldots,u_r^{\otimes3}$ generated by an orthonormal frame are themselves Frobenius-orthonormal.  Bessel's inequality therefore gives $S_r(T)\le\|T\|_\F$.  The top-$r$ diagonal score is also at most the Euclidean, hence Frobenius, norm.  This proves the claim.
\end{proof}

\begin{proof}[Proof of \cref{thm:multiple}]
We induct on the number of detections already made.  The induction hypothesis is:
(i) every reported estimator is paired with a distinct true change and obeys \cref{eq:multiple-rate};
(ii) every remaining true change is in exactly one current recursive segment and is at distance at least $3h/4$ from that segment's boundaries; and
(iii) any scanned subinterval containing no remaining change has only the inherited edge contaminations described in \cref{lem:contamination}.
The hypothesis is true initially because true changes and global boundaries are separated by at least $\Delta$, while $h\le3\Delta/4$.

\paragraph{Step 1: intervals with no undetected change are inactive.}
For any seeded interval containing no undetected change, \cref{lem:contamination} and the induction error bounds give a population score bounded by a fixed multiple of
$\max_k\kappa_kR_k/\sqrt h$, hence by a fixed multiple of $\eps_h$ through \cref{eq:Rk}.  Adding empirical error $\eps_h$ remains below $\tau$ when $C_0$ is sufficiently large.  Thus neither a terminal recursive segment nor a change-free seed inside a nonterminal segment can create a duplicate or false detection.

\paragraph{Step 2: a segment with an undetected change is active.}
Let $\eta_k$ be any undetected change in the segment.  By the induction boundary margin and \cref{lem:seed-geometry}, there is a length-$h$ isolating seed $I$ satisfying \cref{eq:seed-balance}.  It contains no other true change because $h<\Delta$.  Its population score at $\eta_k$ is at least $\kappa_k\sqrt h/4$.  After subtracting $\eps_h$, this exceeds $\tau$ by the upper threshold and signal conditions.  Thus the active set is nonempty.

\paragraph{Step 3: the chosen seed contains one change and the padded scan localizes it.}
The seeded family has no interval shorter than $h$, while Step~2 supplies an active interval of length $h$.  Hence the shortest active seed $I^*=(u,v]$ has length exactly $h$.  Since $h<\Delta$, it contains at most one true change.  It cannot contain zero true changes by Step~1, so it contains exactly one, say $\eta_k$.

Set $g=h/8$, $u^+=\max\{s,u-g\}$, and $v^+=\min\{e,v+g\}$ as in \cref{eq:padded-recentering}.  If an endpoint is not clipped, the padding places $\eta_k$ at distance at least $g$ from it; if it is clipped at a recursive boundary, induction hypothesis (ii) gives the stronger distance $3h/4$.  Consequently
\[
 \min\{\eta_k-u^+,v^+-\eta_k\}\ge g,
 \qquad v^+-u^+\le h+2g=5h/4.
\]
The padded interval contains no other change because $5h/4\le15\Delta/16<\Delta$, and its balance is at least
$g/(5h/4)=1/10$.  Its candidate range contains $\eta_k$.  Applying \cref{thm:single} on the uniform deviation event gives
$|\widehat\eta_k-\eta_k|\le R_k$ after choosing $C_{\mathrm{loc}}$ to absorb the fixed constants.

\paragraph{Step 4: preservation of the induction hypothesis.}
Since $R_k\le h/8<\Delta/2$, splitting at $\widehat\eta_k$ assigns every other true change to the correct child.  The nearest remaining change is at distance at least
$\Delta-R_k\ge\Delta-h/8\ge3h/4$ from the new boundary, where the last inequality follows from $h\le3\Delta/4$.  Thus balanced isolating seeds continue to exist.  A child with no remaining change contains at most $R_k$ observations from the wrong regime at its new edge, exactly the contamination form in \cref{lem:contamination}.  This completes the induction.

After $K$ steps, no undetected change remains and Step~1 terminates all recursive calls.  Hence exactly $K$ estimates are returned.
\end{proof}

\section{Proofs for cross-fitted refinement}
\label{app:refinement}

\begin{proof}[Proof of \cref{lem:alignment}]
Let $\widehat C=aD+E$, let $\widehat U$ maximize the frame score, and let $\widehat V$ be defined by \cref{eq:direction}.  By construction,
\[
 \langle\widehat C,\widehat V\rangle=S_r(\widehat C).
\]
Moreover, writing $\widehat V=\sum_jq_j\widehat u_j^{\otimes3}$ with $\sum_jq_j^2=1$,
\[
 |\langle E,\widehat V\rangle|
 \le\sum_j|q_j|\,|E[\widehat u_j^3]|
 \le\sqrt r\|E\|_\inj.
\]
Therefore
\begin{align*}
 a\langle D,\widehat V\rangle
 &=S_r(\widehat C)-\langle E,\widehat V\rangle\\
 &\ge S_r(aD)-2\sqrt r\|E\|_\inj
 \ge a\kappa-2\sqrt r\lambda.
\end{align*}
The assumed inequality yields \cref{eq:alignment}.
\end{proof}

\subsection{A scalar localization lemma}
\begin{lemma}[Sub-Weibull least-squares change localization]
\label{lem:scalar-localization}
Let independent scalar observations have means $\mu_L$ before $\eta$ and $\mu_R$ after $\eta$, with $\delta_\mu=|\mu_R-\mu_L|>0$.  Suppose centered errors have $\psi_{2/3}$ norm at most $\nu$, and independent anchor estimates satisfy
$|\widehat\mu_L-\mu_L|\vee|\widehat\mu_R-\mu_R|\le\delta_\mu/8$.  Let $\widehat\eta$ minimize the residual sum of squares using these two fixed means over a search interval containing $\eta$.  Then, with probability at least $1-2e^{-x}$,
\begin{equation}
 |\widehat\eta-\eta|
 \le C\left\{\frac{\nu^2x}{\delta_\mu^2}
 +\frac{\nu x^{3/2}}{\delta_\mu}+1\right\}.
 \label{eq:scalar-rate}
\end{equation}
\end{lemma}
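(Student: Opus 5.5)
Write $Y_i=\mu_i+\xi_i$, where $\mu_i\in\{\mu_L,\mu_R\}$ and the $\xi_i$ are independent, centered, with $\|\xi_i\|_{\psi_{2/3}}\le\nu$. For a candidate split $t$, the fitted residual sum of squares is $\mathrm{RSS}(t)=\sum_{i\le t}(Y_i-\widehat\mu_L)^2+\sum_{i>t}(Y_i-\widehat\mu_R)^2$.

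We compare $t>\eta$ with $\eta$; the case $t<\eta$ is symmetric. Only the indices $i\in(\eta,t]$ change fitted mean, so
\[
 \mathrm{RSS}(t)-\mathrm{RSS}(\eta)=\sum_{i=\eta+1}^{t}\bigl\{(Y_i-\widehat\mu_L)^2-(Y_i-\widehat\mu_R)^2\bigr\}
 =(\widehat\mu_R-\widehat\mu_L)\sum_{i=\eta+1}^{t}\bigl(2Y_i-\widehat\mu_L-\widehat\mu_R\bigr).
\]
On $(\eta,t]$ we have $Y_i=\mu_R+\xi_i$. The anchor condition gives $\bigl|2\mu_R-\widehat\mu_L-\widehat\mu_R-(\mu_R-\mu_L)\bigr|\le\delta_\mu/4$. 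Also $|\widehat\mu_R-\widehat\mu_L|\in[3\delta_\mu/4,5\delta_\mu/4]$, and it has the same sign as $\mu_R-\mu_L$. Together these give
\[
 \mathrm{RSS}(t)-\mathrm{RSS}(\eta)\ge c\,\delta_\mu^2(t-\eta)-C\delta_\mu\Bigl|\sum_{i=\eta+1}^{t}\xi_i\Bigr|.
\]
Because the anchors are independent of the scanned observations, we may condition on them and treat them as fixed. A minimizer $\widehat\eta$ must satisfy $\mathrm{RSS}(\widehat\eta)\le\mathrm{RSS}(\eta)$. Hence it suffices to show that, with high probability, $|\sum_{i=\eta+1}^{\eta+k}\xi_i|<(c/C)\delta_\mu k$ simultaneously for all $k>k_0$, where $k_0$ is the right-hand side of \cref{eq:scalar-rate}.

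\textbf{Uniform control of partial sums.} This is the main obstacle. A plain union bound over all $k$ would cost a $\log n$ factor, so we use a dyadic peeling argument instead. For blocks $k\in(2^{j-1},2^j]$, we apply a maximal form of \cref{eq:scalar-subweibull} with unit weights at confidence level $x_j=x+2\log(j+1)$. A maximal form is available by Lévy--Ottaviani, or by applying the bound at $2^j$ and using a chaining over the block. This gives, with probability at least $1-2e^{-x_j}$ up to constants,
\[
 \max_{k\le2^j}\Bigl|\sum_{i=\eta+1}^{\eta+k}\xi_i\Bigr|\le C\nu\bigl\{\sqrt{x_j2^j}+x_j^{3/2}\bigr\}.
\]
The bad event of block $j$ is harmless once $c\delta_\mu2^{j-1}$ exceeds this bound. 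That holds when $2^j\gtrsim\nu^2x_j/\delta_\mu^2+\nu x_j^{3/2}/\delta_\mu$. Summing $e^{-x_j}=e^{-x}(j+1)^{-2}$ over $j$ gives total failure probability of order $e^{-x}$; after adjusting constants this is $2e^{-x}$, combining the two sides.

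The $\log j$ inflation of $x_j$ must be absorbed. At the critical scale, $2^j\asymp\nu^2x/\delta_\mu^2+\nu x^{3/2}/\delta_\mu$, and the extra $\log j$ is dominated by the linear growth in $2^j$ for every larger $j$. So only the first relevant block matters, and $x_j\lesssim x$ there after enlarging $C$. The additive $+1$ in \cref{eq:scalar-rate} accounts for integer rounding and for the case where the critical $k_0$ is below one.
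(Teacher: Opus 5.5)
Your RSS increment identity, the anchor-based drift bound $c\delta_\mu^2(t-\eta)-C\delta_\mu|\sum\xi_i|$, and the plan of dyadic peeling with a maximal partial-sum bound all match the paper. Your Ottaviani route to the maximal inequality is a valid substitute for the paper's route, which applies Doob's $L^q$ inequality and Markov to the moment bound $\|\sum_{i\le m}\xi_i\|_q\le C\nu\{\sqrt{mq}+q^{3/2}\}$.

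The gap is in how you index the peeling. You use absolute blocks $k\in(2^{j-1},2^j]$ with confidence $x_j=x+2\log(j+1)$, and then claim that $x_j\lesssim x$ at the first relevant block. That block has $2^j\asymp k_0\asymp\nu^2x/\delta_\mu^2+\nu x^{3/2}/\delta_\mu$, so $j\asymp\log_2 k_0$ and
\[
 x_j=x+2\log\bigl(1+\log_2 k_0\bigr).
\]
This carries a $\log\log(\nu/\delta_\mu)$ term. For fixed $x$ and $\nu/\delta_\mu\to\infty$, it is not bounded by any universal multiple of $x$. What your argument actually proves is $|\widehat\eta-\eta|\lesssim\nu^2\{x+\log\log(\nu/\delta_\mu)\}/\delta_\mu^2+\cdots$, not \cref{eq:scalar-rate} with a universal $C$. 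Your remark that $\log j$ is dominated by $2^j$ for \emph{larger} $j$ is correct, but the radius is determined by the first block, where the index itself already depends on $\nu/\delta_\mu$.

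The fix is to anchor the peeling at the critical scale rather than at $k=1$. The union bound is only needed for $k\ge M$, where $M=C_0\{\nu^2x/\delta_\mu^2+\nu x^{3/2}/\delta_\mu+1\}$. Use shells $2^jM\le k<2^{j+1}M$ for $j\ge0$, with confidence $q_j=x+j+2$ counted from that scale. Then:
\begin{itemize}
\item $q_0\asymp x$.
\item The elementary bounds $q_j/2^j\lesssim x$ and $q_j^{3/2}/2^j\lesssim x^{3/2}$ give $C\nu\{\sqrt{2^{j+1}Mq_j}+q_j^{3/2}\}<c\,2^jM\delta_\mu$ on every shell once $C_0$ is large.
\item $\sum_je^{-q_j}\lesssim e^{-x}$.
\end{itemize}
This is exactly the paper's peeling, and with it the rest of your argument goes through unchanged.
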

\begin{proof}
Assume without loss of generality that $\mu_R>\mu_L$ and consider $t>\eta$; the other side is identical.  Put $h=t-\eta$, $\widehat\delta=\widehat\mu_R-\widehat\mu_L$, and write the post-change observations as $Z_i=\mu_R+\xi_i$.  The exact objective increment is
\begin{align*}
 Q(t)-Q(\eta)
 &=\sum_{i=\eta+1}^{t}\{(Z_i-\widehat\mu_L)^2-(Z_i-\widehat\mu_R)^2\}\\
 &=h\widehat\delta\{2\mu_R-\widehat\mu_L-\widehat\mu_R\}
   +2\widehat\delta\sum_{i=\eta+1}^{t}\xi_i.
\end{align*}
On the anchor event, both factors in the deterministic product lie between
$3\delta_\mu/4$ and $5\delta_\mu/4$.  Hence
\begin{equation}
 Q(t)-Q(\eta)
 \ge c h\delta_\mu^2-C\delta_\mu
 \left|\sum_{i=\eta+1}^{t}\xi_i\right|.
 \label{eq:rss-drift}
\end{equation}

We next record a maximal partial-sum consequence of the same sub-Weibull inequality used in \cref{eq:scalar-subweibull}.  For every $q\ge2$ and $m\ge1$,
\[
 \left\|\sum_{i=1}^{m}\xi_i\right\|_q
 \le C\nu\{\sqrt{mq}+q^{3/2}\}.
\]
The partial sums form a martingale, so Doob's $L^q$ inequality and Markov's inequality imply
\begin{equation}
 \Pp\left(
  \max_{1\le k\le m}\left|\sum_{i=1}^k\xi_i\right|
  >C\nu\{\sqrt{mq}+q^{3/2}\}
 \right)\le e^{-q}
 \label{eq:maximal-subweibull}
\end{equation}
(after enlarging $C$).

Let
$M=C_0\{\nu^2x/\delta_\mu^2+\nu x^{3/2}/\delta_\mu+1\}$.
Peel the candidate distances into shells $2^jM\le h<2^{j+1}M$ and apply
\cref{eq:maximal-subweibull} with $m=\lceil2^{j+1}M\rceil$ and
$q_j=x+j+2$.  The elementary bounds
$(x+j+2)/2^j\le Cx$ and
$(x+j+2)^{3/2}/2^j\le Cx^{3/2}$ for $x\ge1$ show, by the definition of $M$ and a sufficiently large $C_0$, that throughout shell $j$,
\[
 C\nu\{\sqrt{2^{j+1}Mq_j}+q_j^{3/2}\}
 < c\,2^jM\delta_\mu.
\]
Thus \cref{eq:rss-drift} is strictly positive for every $h\ge M$ outside an event of probability
$\sum_{j\ge0}e^{-q_j}\le Ce^{-x}$.  Repeating on the left and changing constants proves \cref{eq:scalar-rate} with probability at least $1-2e^{-x}$.
\end{proof}

\begin{proof}[Proof of \cref{thm:refinement}]
Condition on the pilot and outer-anchor observations.  Then $\widehat V$ and the two anchor means are fixed, while the central held-out observations remain independent.  By \cref{lem:alignment}, the scalar mean jump is
$\delta_\mu=|\langle D_k,\widehat V\rangle|\ge\kappa_k/2$.  \Cref{lem:directional-chaos} gives held-out centered $\psi_{2/3}$ norm at most $\nu_L$.  Apply \cref{lem:scalar-localization} with $x=\log(8K/\delta)$ to each fold and each change and take a union bound.  If both fold-specific estimates satisfy the stated radius, their median with the preliminary estimate also satisfies it.  The theorem's anchor-error assumption is sufficient for the scalar lemma because $\delta_\mu\ge\kappa_k/2$, so an error of at most $\kappa_k/16$ is no larger than $\delta_\mu/8$.
\end{proof}

\section{Proof of the minimax lower bound}
\label{app:lower}

\begin{proof}[Proof of \cref{thm:lower}]
The density in \cref{eq:lower-density} is nonnegative because
$|\theta\phi_3(x_1)|\le1/2$, and it integrates to one because $\E_\mu\phi_3=0$.  Orthogonality proves that every coefficient of total degree at most two is unchanged and that the tensor jump is $\theta e_1^{\otimes3}$.

Choose two models whose change locations differ by $h$.  The joint laws differ only on those $h$ observations.  For $|z|\le1/2$, $(1+z)\log(1+z)\le z+2z^2$.  Therefore
\begin{align*}
 \mathrm{KL}(f_\kappa\,\|\,f_0)
 &=\int(1+\kappa\phi_3)\log(1+\kappa\phi_3)\,d\mu\\
 &\le\kappa\int\phi_3\,d\mu+2\kappa^2\int\phi_3^2\,d\mu
 =2\kappa^2.
\end{align*}
Thus the product-law divergence is at most $2h\kappa^2$.  Take
$h=\lfloor c_0/\kappa^2\rfloor$ with $c_0$ sufficiently small.  Pinsker's inequality and Le Cam's two-point lemma \citep{tsybakov2009introduction} give
\[
 \inf_{\widehat\eta}\max_{j=0,1}
 \E_j|\widehat\eta-\eta_j|
 \ge\frac h4\{1-\mathrm{TV}(P_0,P_1)\}
 \ge c'\kappa^{-2}.
\]
\end{proof}

\section{Geometrically beta-mixing extension}
\label{app:mixing}

The main paper assumes independence to isolate the tensor issue.  A direct extension is available under geometric absolute regularity.  Let $Z_i=H_3(X_i)-\E H_3(X_i)$ and assume
\[
 \beta(q)\le C_\beta e^{-c_\beta q}.
\]
Partition a CUSUM interval into consecutive blocks of length
$q\asymp c_\beta^{-1}\log(2m/\delta)$, separate odd and even blocks, and apply Berbee coupling to each family.  The total coupling failure is at most $\delta$.  Within a block $J$, the quasi-triangle inequality for $\psi_{2/3}$ norms bounds a weighted contraction by $C\nu_La_J$, where $a_J=\sum_{i\in J}|w_i|$.  Applying the independent weighted tensor theorem to the coupled blocks gives the following conservative corollary.

\begin{corollary}[Geometric mixing, block form]
\label{cor:mixing}
Under the preceding assumptions, with probability at least $1-2\delta$,
\begin{equation}
 \left\|\sum_iw_iZ_i\right\|_\inj
 \le C\nu_L\left\{\sqrt{q\rho}\|w\|_2+q\rho^{3/2}\|w\|_\infty\right\},
 \label{eq:mixing-bound}
\end{equation}
where $\rho=3p\log17+\log(C/\delta)$.  For balanced CUSUMs, the independent rate is inflated by at most $\sqrt q$ in the Gaussian term and $q$ in the third-order term.
\end{corollary}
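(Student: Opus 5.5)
The plan is the standard blocking-plus-coupling reduction to \cref{thm:tensor-bernstein}, carried out at the level of block sums so that the three-sphere net argument is reused unchanged.

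\textbf{Blocking and coupling.} Partition the index set into consecutive blocks $J_1,J_2,\ldots$ of length $q\asymp c_\beta^{-1}\log(2m/\delta)$. Write $T=\sum_iw_iZ_i=T_{\mathrm{odd}}+T_{\mathrm{even}}$, where $T_{\mathrm{odd}}$ and $T_{\mathrm{even}}$ collect the odd- and even-indexed block sums $Y_J=\sum_{i\in J}w_iZ_i$. Blocks of the same parity are separated by gaps of length $q$. Berbee's coupling lemma then supplies, for each parity family, independent copies $\widetilde Y_J$ with $\widetilde Y_J\overset{d}{=}Y_J$ and
\[
\Pp\Bigl(\exists J:\widetilde Y_J\neq Y_J\Bigr)\le (m/q)\beta(q)\le C_\beta (m/q)e^{-c_\beta q}\le\delta/2
\]
for the chosen $q$. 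Both families together therefore cost probability at most $\delta$. On the coupling event it suffices to bound $\|\sum_J\widetilde Y_J\|_{\inj}$ for each parity separately, with failure probability $\delta/2$ each, giving total probability at least $1-2\delta$.

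\textbf{Directional tails of block sums.} Fix a net triple $(u,v,z)$ and set $B=\Sym(u\otimes v\otimes z)$. Then $\langle Y_J,B\rangle=\sum_{i\in J}w_i\langle Z_i,B\rangle$. By \cref{lem:directional-chaos} each summand has $\psi_{2/3}$ norm at most $|w_i|\nu_L$; the bounded-density assumption is applied marginally, so no independence is needed here. The $\psi_{2/3}$ quasi-norm satisfies a triangle inequality up to a constant, or equivalently the $L^q$ moment characterization gives $\|\sum_J\cdot\|_{L^s}\le\sum\|\cdot\|_{L^s}\lesssim s^{3/2}\nu_L a_J$. Hence $\|\langle Y_J,B\rangle\|_{\psi_{2/3}}\le C\nu_L a_J$, where $a_J=\sum_{i\in J}|w_i|$.

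\textbf{Applying the independent argument.} The coupled variables $\langle\widetilde Y_J,B\rangle$ are independent, centered and $\psi_{2/3}$, so \cref{eq:scalar-subweibull} applies with block weights $a=(a_J)_J$. Taking $x=\rho$ and a union bound over the $17^{3p}$ net triples reproduces the proof of \cref{thm:tensor-bernstein} verbatim, including the $1/8$-net telescoping step; that step is deterministic and uses only multilinearity. The result is
\[
\Bigl\|\sum_J\widetilde Y_J\Bigr\|_{\inj}\le C\nu_L\bigl\{\sqrt\rho\,\|a\|_2+\rho^{3/2}\|a\|_\infty\bigr\}.
\]
Cauchy--Schwarz gives $a_J^2\le q\sum_{i\in J}w_i^2$, so $\|a\|_2\le\sqrt q\|w\|_2$. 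Trivially $\|a\|_\infty\le q\|w\|_\infty$. Substituting these yields \cref{eq:mixing-bound}.

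\textbf{Balanced CUSUMs and the main obstacle.} For balanced CUSUMs, inserting $\|w\|_2=1$ and $\|w\|_\infty\le(\alpha m)^{-1/2}$ shows the claimed inflation factors $\sqrt q$ and $q$. The main delicate point is the Berbee step. The coupling must be done on the $p^3$-dimensional block sums, or equivalently on the raw blocks of $X$'s, so that one coupling event serves all net directions simultaneously. It must not be done per direction, since a per-direction coupling would blow up the union bound. Coupling the raw $X$-blocks handles this, because every $\langle\widetilde Y_J,B\rangle$ is then a fixed function of the coupled block. A minor point is that the marginal laws of the coupled blocks equal the originals, so the densities remain bounded by $L$.
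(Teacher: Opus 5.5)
Your proposal is correct and follows essentially the same route as the paper: parity blocking with Berbee coupling, a block-level bound $\|\langle Y_J,B\rangle\|_{\psi_{2/3}}\le C\nu_L a_J$, the scalar sub-Weibull inequality applied with block weights $a_J$, Cauchy--Schwarz giving $\sqrt q\|w\|_2$ and $q\|w\|_\infty$, and the unchanged three-sphere net argument. Your Minkowski-in-$L^s$ justification of the block bound is a slightly more careful version of the paper's appeal to the $\psi_{2/3}$ quasi-triangle inequality, since iterating a two-term quasi-triangle inequality over $q$ summands would not by itself give a universal constant.
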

\begin{proof}
For each parity of blocks, Berbee coupling constructs independent block copies with the same block marginals; the failure probability is at most the number of blocks times $\beta(q)$, hence at most $\delta$ by the choice of $q$.  For a fixed tensor direction $B$ with $\|B\|_\F\le1$, the contraction of block $J$ is
$Y_J=\sum_{i\in J}w_i\langle Z_i,B\rangle$.  The quasi-triangle inequality for $\psi_{2/3}$ gives
$\|Y_J\|_{\psi_{2/3}}\le C\nu_La_J$, where
$a_J=\sum_{i\in J}|w_i|$.  Apply the independent scalar inequality to the normalized block variables with deterministic weights $a_J$.  Cauchy--Schwarz within each block yields
\[
 \left(\sum_Ja_J^2\right)^{1/2}\le\sqrt q\|w\|_2,
 \qquad \max_Ja_J\le q\|w\|_\infty.
\]
The three-sphere net argument from \cref{thm:tensor-bernstein} controls all directions, and summing the odd- and even-block bounds changes only the universal constant.  Adding the coupling failure proves the claim.
\end{proof}
This block result is not claimed to be sharp in the mixing parameters.  It shows that the induction and localization arguments are unchanged once an appropriate uniform tensor deviation event is available.

\section{Additional computational and experimental details}
\label{app:experiments}

\subsection{Full-tensor stress curve}
\begin{figure}[H]
\centering
\includegraphics[width=0.92\linewidth]{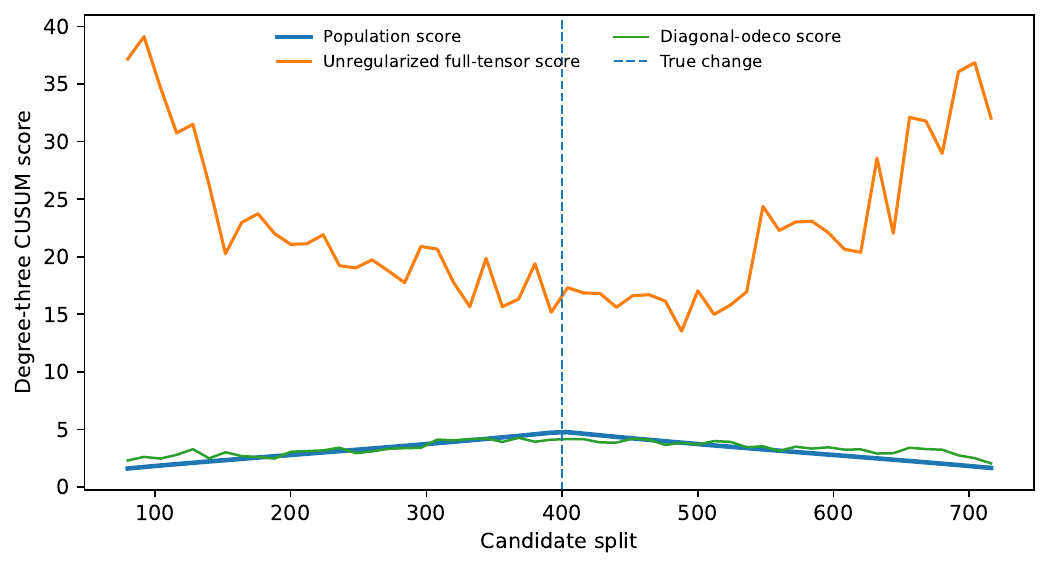}
\caption{At $d=100,n=800$, the unregularized implicit full-tensor score is dominated by tensor noise, while the diagonal-odeco score tracks the population tent.  This behavior is consistent with the $d^{3/2}/\sqrt n$ CUSUM term in \cref{eq:full-cusum-bound}.}
\label{fig:full-stress}
\end{figure}

\subsection{Monte Carlo isometry check}
For random unit vectors $u,v\in\R^9$ and $250{,}000$ uniform samples in $d=8$, the code returned
\[
 \widehat\E H_3(X)[u^3]^2=1.0110,\qquad
 \widehat\E H_3(X)[v^3]^2=1.0075,
\]
and
\[
 \widehat\E H_3(X)[u^3]H_3(X)[v^3]=0.04091,
 \qquad \langle u,v\rangle^3=0.03950.
\]
This is a numerical check, not part of the proof.

\subsection{Complete single-change summaries}
\begin{table}[H]
\centering
\scriptsize
\caption{Median, mean, and $90\%$ quantile of absolute localization error over $30$ replications.}
\label{tab:full-single}
\begin{tabular}{@{}rrlrrr@{}}
\toprule
$d$ & $n$ & Method & Median & Mean & $q_{0.9}$\\
\midrule
20&1200&LR-D3 preliminary&34&88.33&268.4\\
20&1200&LR-D3 refined&32&81.33&288.6\\
20&1200&All cubic diagonal&48&119.87&303.6\\
20&1200&Full degree two&462&420.80&504.0\\
20&1200&Mean CUSUM&324&319.20&493.2\\
20&1200&Oracle cubic&30&70.27&251.6\\
\addlinespace
50&3000&LR-D3 preliminary&24&48.87&108.4\\
50&3000&LR-D3 refined&42&64.93&122.8\\
50&3000&All cubic diagonal&52&106.87&258.4\\
50&3000&Full degree two&1222&1181.60&1260.0\\
50&3000&Mean CUSUM&753&739.00&1236.6\\
50&3000&Oracle cubic&24&48.87&108.4\\
\addlinespace
100&6000&LR-D3 preliminary&37&58.20&140.6\\
100&6000&LR-D3 refined&38&63.47&135.2\\
100&6000&All cubic diagonal&132&228.20&315.0\\
100&6000&Full degree two&2496&2481.20&2520.0\\
100&6000&Mean CUSUM&2004&1620.53&2520.0\\
100&6000&Oracle cubic&37&58.20&140.6\\
\addlinespace
200&12000&LR-D3 preliminary&34&64.07&154.0\\
200&12000&LR-D3 refined&30&79.80&217.8\\
200&12000&All cubic diagonal&52&113.20&190.0\\
200&12000&Full degree two&5040&5027.00&5040.0\\
200&12000&Mean CUSUM&3417&3095.60&4964.4\\
200&12000&Oracle cubic&34&64.07&154.0\\
\bottomrule
\end{tabular}
\end{table}

\subsection{Implementation details}
The large Monte Carlo loops exploit the exact diagonal-odeco model.  For each coordinate, a prefix sum of $\phi_3(X_{ij})$ is computed once.  A candidate CUSUM is then a vector operation followed by selection of the largest $r$ absolute coordinates.  The full degree-two baseline is evaluated by compact prefix matrices only at candidate locations.  The representative full-tensor curve uses PyTorch, the implicit contraction \cref{eq:implicit-contraction}, Adam iterations, and QR projection onto the Stiefel manifold; it is not used in the high-dimensional replication loops.

\end{document}